\def\Secref#1{Section~\ref{#1}}
\def\eqref#1{equation~\ref{#1}}
\def\Eqref#1{Equation~\ref{#1}}
\def\1{\bm{1}}
\newcommand{\rwa}[1] {}
\def\rvf{{\mathbf{f}}}
\def\rvg{{\mathbf{g}}}
\def\rvv{{\mathbf{v}}}
\def\rvx{{\mathbf{x}}}
\def\rvy{{\mathbf{y}}}
\DeclareMathAlphabet{\mathsfit}{\encodingdefault}{\sfdefault}{m}{sl}
\SetMathAlphabet{\mathsfit}{bold}{\encodingdefault}{\sfdefault}{bx}{n}
\newcommand{\rhoreg}{\rho_{\mathrm{reg}}}
\newcommand{\rhoin}{\rho_{\mathrm{in}}}
\newcommand{\rhoout}{\rho_{\mathrm{out}}}
\newcommand{\Rot}{\mathrm{Rot}}
\newcommand{\SOtwo}{\mathrm{SO}(2)}
\newcommand{\SO}{\mathrm{SO}}
\newcommand{\tout}{t_{\mathrm{out}}}
\newcommand{\tin}{t_{\mathrm{in}}}
\newcommand{\ours}{\textsc{ecco}}
\newtheorem*{prop}{Proposition}
\title{Trajectory Prediction using Equivariant Continuous Convolution}
\author{Robin Walters \thanks{Equal Contribution}\\
Northeastern University\\
\texttt{r.walters@northeastern.edu}
\And Jinxi Li$^*$ \\
Northeastern University\\
\texttt{li.jinxi1@northeastern.edu} \\
\And
Rose Yu \\
University of California, San Diego \\
\texttt{roseyu@ucsd.edu} 
}
\newcommand{\edits}[1]{{\color{black}#1}}
\begin{document}

\maketitle



\begin{abstract}  
Trajectory prediction is a critical part of many AI applications, for example, the safe operation of autonomous vehicles. However, current methods are prone to making inconsistent and physically unrealistic predictions. We leverage insights from  fluid dynamics to overcome this limitation by considering internal symmetry in real-world trajectories. We  propose a novel model, \textbf{E}quivariant \textbf{C}ontinous \textbf{CO}nvolution (\ours{}) for improved trajectory prediction.  \ours{} uses rotationally-equivariant continuous convolutions to embed the symmetries of the system. On both vehicle and pedestrian trajectory datasets, \ours{} attains competitive accuracy  with significantly fewer parameters. \rwa{remove claim?, add multi-agent}It is also more sample efficient, generalizing automatically from few data points in any orientation.  Lastly, \ours{} improves generalization with equivariance, resulting in more physically consistent predictions.   Our method provides a fresh perspective towards increasing trust and transparency in deep learning models. Our code and data can be found at \url{https://github.com/Rose-STL-Lab/ECCO}.
\end{abstract}



\section{Introduction}

Trajectory prediction is one of the core tasks in AI,  from the movement of basketball players to fluid particles to car traffic \citep{sanchez2020learning, gao2020vectornet, shah2016applying}. 
A common abstraction underlying these tasks is the movement of many interacting agents, analogous to  a many-particle system. Therefore,  understanding  the states of these particles, their  dynamics,  and hidden interactions is critical to accurate and robust trajectory forecasting.

\begin{wrapfigure}{r}{0.5\textwidth}
\vspace{-0.4cm}
  \begin{center}
    \includegraphics[width=0.5\textwidth,trim=40 30 40 20, clip]{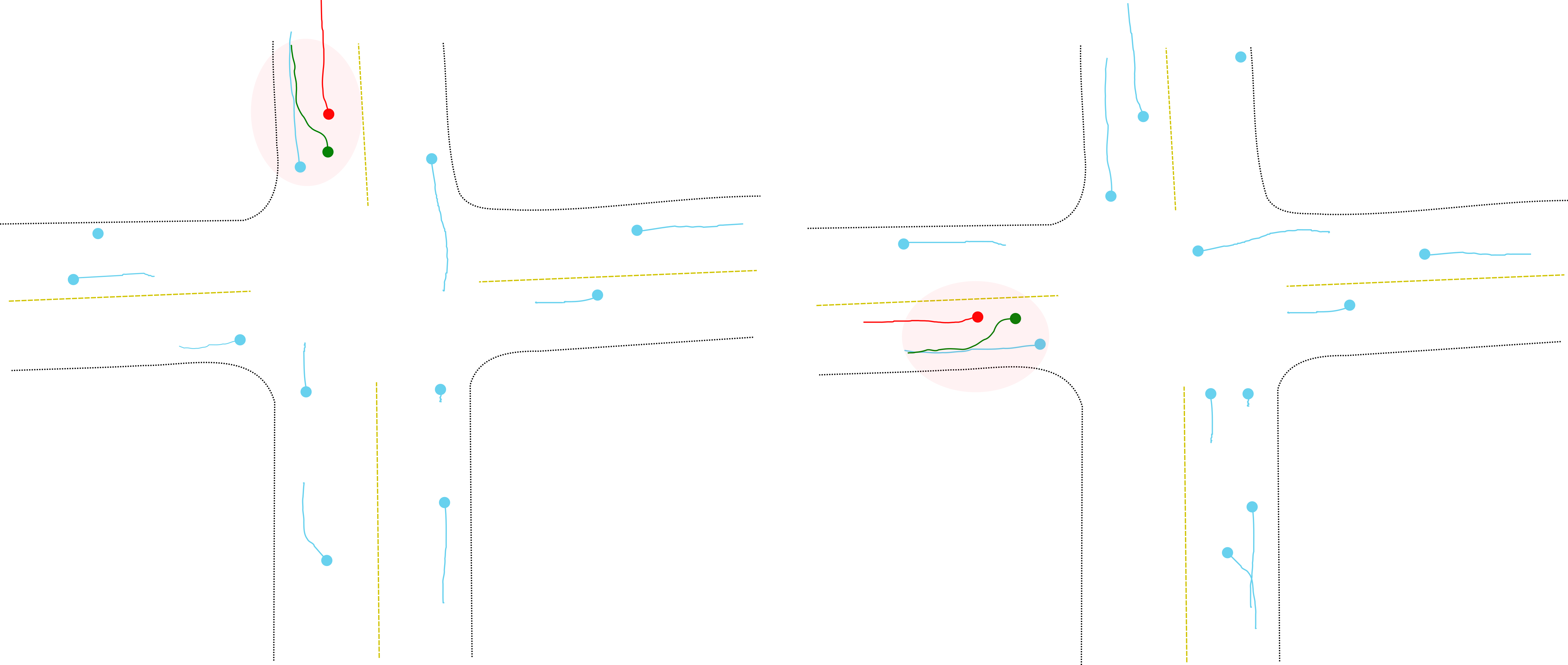}
  \caption{Car trajectories in two scenes.  Though the entire scenes are not related by a rotation, the circled areas are.  \ours{} exploits this symmetry to improve generalization and sample efficiency.}
  \end{center}
  \vspace{-3mm}
  \label{fig:twoscenes}
\end{wrapfigure}

 Even for purely physical systems such as in particle physics, the complex interactions among a large number of particles makes this a difficult problem.  For vehicle or pedestrian trajectories, this challenge is further compounded with latent factors such as human psychology.  Given these difficulties, current approaches require large amounts of training data and many model parameters. State-of-the-art methods in this domain such as 
 \cite{gao2020vectornet}  
 are based on graph neural networks. They do not exploit the physical properties of system and often make predictions which are not self-consistent or physically meaningful. Furthermore, they predict a single agent trajectory at a time instead of multiple agents simultaneously. 


Our model is built upon a key insight of  many-particle systems pertaining to intricate internal symmetry. Consider a model which predicts the trajectory of cars on a road.  To be successful, such a model must understand the physical behavior of vehicles together with  human psychology. It should distinguish left from right turns, and give consistent outputs for intersections rotated with different orientation. As shown in \autoref{fig:twoscenes}, a driver’s velocity rotates with the entire scene, whereas vehicle interactions are invariant to such a rotation. Likewise, psychological factors such as reaction speed or attention may be considered vectors with prescribed transformation properties. Data augmentation is a common practice to deal with rotational invariance, but it cannot guarantee invariance and requires longer training. Since rotation is a continuous group, augmentation requires sampling from infinitely many possible angles.


In this paper, we propose an equivariant continuous convolutional model, \ours{}, for trajectory forecasting. 
 Continuous convolution generalizes discrete convolution and is adapted to data in  many-particle systems with complex local interactions. \citet{ummenhofer2019lagrangian} designed a model using continuous convolutions for particle-based fluid simulations.
Meanwhile, equivariance to group symmetries has proven to be a powerful tool to integrate physical intuition  in physical science applications  \citep{wang2020incorporating, brown2019equivariant, kanwar2020equivariant}.  Here, we test the hypothesis that an equivariant model can also capture internal symmetry in non-physical human behavior. Our model utilizes a novel weight sharing scheme, torus kernels, and is rotationally equivariant.

  
We evaluate our model on two real-world trajectory datasets: Argoverse autonomous vehicle dataset \citep{chang2019argoverse}  and TrajNet++ pedestrian trajectory forecasting challenge \citep{kothari2020human}.       We demonstrate on par or better prediction accuracy to baseline models and data augmentation with fewer parameters, better sample efficiency, and stronger generalization properties.  Lastly, we demonstrate theoretically and experimentally that our polar coordinate-indexed filters have lower equivariance discretization error due to being better adapted to the symmetry group.  

Our main contributions are as follows:
\begin{itemize}
    \item We propose \textbf{E}quivariant \textbf{C}ontinous \textbf{CO}nvolution (\ours{}), a rotationally equivariant deep neural network that can capture internal symmetry in trajectories. 
    \item We  design \ours{} using a novel weight sharing scheme based on orbit decomposition and polar coordinate-indexed filters. We implement equivariance for both the standard and regular representation $L^2(\mathrm{SO}(2))$.
    \item On benchmark Argoverse and TrajNet++ datasets, \ours{} demonstrates  comparable accuracy  while enjoying better generalization, fewer parameters, and better sample complexity.
\end{itemize}



\section{Related Work}
\label{sec:relwork}
\vspace{-3mm}
\paragraph{Trajectory Forecasting}
For vehicle trajectories,  classic  models  in transportation include the  Car-Following model \citep{pipes1966car} and Intelligent Driver model \citep{kesting2010enhanced}.  Deep learning has also received considerable attention;  for example, \citet{liang2020learning} and \citet{gao2020vectornet}  use graph neural networks to predict vehicle trajectories.  \citet{djuric2018short} use rasterizations of the scene with CNN. See the review paper by \citet{veres2019deep} for deep learning  in transportation.  For human trajectory modeling,  \citet{alahi2016social} propose Social LSTM to  learn these human-human interactions. TrajNet  \citep{sadeghiankosaraju2018trajnet} and TrajNet++ \citep{kothari2020human} introduce benchmarking  for human trajectory forecasting. We refer readers to  \citet{rudenko2020human} for a comprehensive survey. Nevertheless, many deep learning models are  data-driven. They require large amounts of data, have many parameters, and can generate physically inconsistent predictions. 

\vspace{-3mm}
\paragraph{Continuous Convolution}
Continuous convolutions over point clouds ({CtsConv}) have been successfully applied to classification and segmentation tasks \citep{wang2018deep, lei2019octree, xu2018spidercnn, wu2019pointconv, su2018splatnet, li2018pointcnn, hermosilla2018monte, atzmon2018point, hua2018pointwise}. More recently, a few works have used continuous convolution for modeling trajectories or flows.  For instance, \citet{wang2018deep} uses {CtsConv} for inferring flow on LIDAR data.  \citet{schenck2018spnets} and \citet{ummenhofer2019lagrangian} model fluid simulation using \texttt{CtsConv}.   Closely related to our work is \citet{ummenhofer2019lagrangian}, who design a continuous convolution network for particle-based fluid simulations. However, they use a ball-to-sphere mapping which is not well-adapted for rotational equivariance and only encode 3 frames of input.  Graph neural networks (GNNs) are a related strategy which have been used for modeling particle system dynamics \citep{sanchez2020learning}. GNNs are also permutation invariant, but they do not natively encode relative positions and local interaction as a {CtsConv}-based network does.

\vspace{-3mm}
\paragraph{Equivariant and Invariant Deep Learning}
Developing neural nets that preserve symmetries has been a fundamental task in image recognition \citep{cohen2019gauge,  weiler2019e2cnn, cohen2016group, chidester2018rotation, Lenc2015understanding, kondor2018generalization, bao2019equivariant, worrall2017harmonic, cohen2016steerable, weiler2018learning, dieleman2016cyclic, maron2020sets}.  Equivariant networks have also been used to predict dynamics: for example, \cite{wang2020incorporating} predicts fluid flow using Galilean equivariance but only for gridded data. \citet{fuchs2020se} use $\mathrm{SE}(3)$-equivariant transformers to predict trajectories for a small number of particles as a regression task.   As in this paper, both \citet{Bekkers2020B-Spline} and \citet{finzi2020generalizing} address the challenge of parameterizing a kernel over continuous Lie groups.  
\citet{finzi2020generalizing} apply their method to trajectory prediction on point clouds using a small number of points following strict physical laws.  \citet{worrall2017harmonic} also parameterizes convolutional kernels using polar coordinates, but maps these onto a rectilinear grid for application to image data. \edits{\citet{weng2018rotational} address rotational equivariance by inferring a  global canonicalization of the input. Similar to our work, \citet{esteves2017polar} use functions evenly sampled on the circle, however, their features are only at a single point whereas we assign feature vectors to each point in a point cloud.}  \citet{thomas2018tensor} introduce Tensor Field Networks which are $\SO(3)$-equivariant continuous convolutions.  Unlike our work, both \citet{worrall2017harmonic} and \citet{thomas2018tensor} define their kernels using harmonic functions.  Our weight sharing method using orbits and stabilizers is simpler as it does not require harmonic functions or Clebsch-Gordon coefficients.  Unlike previous work, we implement a regular representation for the continuous rotation group $\SO(2)$ which is compatible with pointwise non-linearities and enjoys an empirical advantage over irreducible representations.





\section{Background}


We first review  the necessary background of continuous convolution and rotational equivariance.  

\subsection{Continuous Convolution}
Continuous convolution ($\mathtt{CtsConv}$) generalizes the discrete convolution to point clouds.  \edits{It provides an efficient and spatially aware way to model the interactions of nearby particles.} Let $\rvf^{(i)} \in \mathbb{R}^{c_\mathrm{in}}$ denote the feature vector of particle $i$.  Thus $\rvf$ is a vector field which assigns to the points $\rvx^{(i)}$ a vector in $\mathbb{R}^{c_\mathrm{in}}$.  The kernel of the convolution $K \colon \mathbb{R}^2 \to \mathbb{R}^{c_{\mathrm{out}} \times c_{\mathrm{in}}}$ is a \textit{matrix} field: for each point $\rvx \in \mathbb{R}^2$, $K(\rvx)$ is a $c_{\mathrm{out}} \times c_{\mathrm{in}}$ matrix.  Let $a$ be a radial local attention map with $a(r) = 0$ for $r > R$.  The output feature vector $\rvg^{(i)}$ of particle $i$  from the continous convolution  is given by      
\begin{equation}\label{eqn:ctsconv}    
\rvg^{(i)} = \mathtt{CtsConv}_{K,R}(\rvx,\rvf; \rvx^{(i)}) = \sum_{j} a(\|\rvx^{(j)}-\rvx^{(i)}\|) K(\rvx^{(j)}-\rvx^{(i)}) \cdot \rvf^{(j)}.  
\end{equation}
%
$\mathtt{CtsConv}$ is naturally equivariant to permutation of labels and is translation invariant.  Equation \ref{eqn:ctsconv} is closely related to graph neural network (GNN) \citep{kipf2017semisupervised, battaglia2018relational}, which is also permutation invariant. Here the  graph is dynamic and implicit with nodes $\rvx^{(i)}$ and edges $e_{ij}$ if $\| \rvx^{(i)} - \rvx^{(j)} \| < R$.  Unlike a GNN which applies the same weights to all neighbours, the kernel $K$ depends on the relative position vector $\rvx^{(i)} - \rvx^{(j)}$.     


\subsection{Rotational Equivariance} 
\label{subsec:EquivContConv}
Continuous convolution  is not naturally rotationally equivariant. Fortunately, we can translate the technique of rotational equivariance on CNNs to continuous convolutions.  \edits{We use the language of Lie groups and their representations.  For more background, see \cite{hall2015lie} and \cite{ knapp2013lie}.} 


More precisely, we denote the symmetry group of 2D rotations by $\SOtwo = \{ \Rot_\theta : 0 \leq \theta < 2 \pi \}$.  As a Lie group, it has both a group structure $\Rot_{\theta_1} \circ \Rot_{\theta_2} = \Rot_{(\theta_1 + \theta_2) \mathrm{mod} 2 \pi}$ which a continuous map with respect to the topological structure.  As a manifold, $\SO(2)$ is homomeomorphic to the circle \edits{$S^1 \cong \lbrace \rvx \in \mathbb{R}^2 : \| \rvx \|=1 \rbrace$}.  The  group $\SOtwo$ can act on a vector space $\mathbb{R}^c$ by specifying a \emph{representation} map $\rho \colon  \SOtwo \to \mathrm{GL}(\mathbb{R}^c)$ which assigns to each element of $\SOtwo$ \edits{an element of the set of invertible $c \times c$ matrices $\mathrm{GL}(\mathbb{R}^c)$}.  The map $\rho$ must a be homomorphism $\rho(\Rot_{\theta_1}) \rho(\Rot_{\theta_1}) = \rho(\Rot_{\theta_1} \circ \Rot_{\theta_2})$. 
For example, the \textit{standard representation} $\rho_1$ on $\mathbb{R}^2$ is by $2 \times 2$ rotation matrices. The \textit{regular representation} $\rhoreg$ on $L^2(\SOtwo) = \lbrace \varphi: \SOtwo \to \mathbb{R} : |\varphi|^2 \text{ is integrable} \rbrace$ is  $\rhoreg(\Rot_\phi)(\varphi) = \varphi \circ \Rot_{-\phi}$.  Given input $\rvf$ with representation $\rhoin$ and output with representation $\rhoout$, a map $F$ is $\SOtwo$-equivariant if 
\[ 
F(\rhoin(\Rot_\theta) \rvf) = \rhoout(\Rot_\theta) F(\rvf).
\]

\edits{ Discrete CNNs are equivariant to a group $G$ if the input, output, and hidden layers carry a $G$-action and the linear layers and activation functions are all equivariant \citep{kondor2018generalization}.  One method for constructing equivariant discrete CNNs is steerable CNN \citep{cohen2016steerable}.  \citet{cohen2019general} derive a general constraint for when a convolutional kernel $K \colon \mathbb{R}^b \to \mathbb{R}^{c_\mathrm{out} \times c_\mathrm{in}}$ is $G$-equivariant.  Assume $G$ acts on $\mathbb{R}^b$ and that $\mathbb{R}^{c_\mathrm{out}}$ and $\mathbb{R}^{c_\mathrm{in}}$ are $G$-representations $\rho_\mathrm{out}$ and $\rho_\mathrm{in}$ respectively, then $K$ is $G$-equivariant if for all $g \in G, \rvx \in \mathbb{R}^2$,
\begin{equation}\label{eqn:steer-equ} 
K(g\rvx) = \rho_{\mathrm{out}}(g) K(\rvx) \rho_{\mathrm{in}}(g^{-1}).
\end{equation} 
For the group $\SO(2)$, \citet{weiler2019e2cnn} solve this constraint using circular harmonic functions to give a basis of discrete equivariant kernels.  In contrast, our method is much simpler and uses  orbits and stabilizers to create continuous convolution kernels.
}









\section{ECCO: Trajectory Prediction using Rotationally Equivariant Continuous Convolution }

In trajectory prediction, given historical position and velocity data of $n$ particles over $t_{\mathrm{in}}$ timesteps, we want to predict their positions over the next $t_{\mathrm{out}}$ timesteps.  Denote the ground truth dynamics as $\xi$, which maps $ \xi(\rvx_{t-\tin:t},\rvv_{t-\tin:t}) = \rvx_{t:t+\tout}$. Motivated by the observation in   \autoref{fig:twoscenes},  we wish to learn a  model $f$ that  approximates the underlying dynamics  while preserving the internal symmetry in the data, specifically rotational equivariance.

We introduce \ours{}, a model for trajectory prediction based on rotationally equivariant continuous convolution. 
We implement rotationally equivariant continuous convolutions using a weight sharing scheme based on orbit decomposition. 
%
%
We also describe equivariant per-particle linear layers which are a special case of continuous convolution with radius $R=0$ analogous to 1x1 convolutions in CNNs. Such layers are useful for passing information between layers from each particle to itself.



\subsection{ECCO Model Overview}
\label{subsec:ModelOverview}


\begin{figure}[htp]
    \centering
    \includegraphics[width=\textwidth]{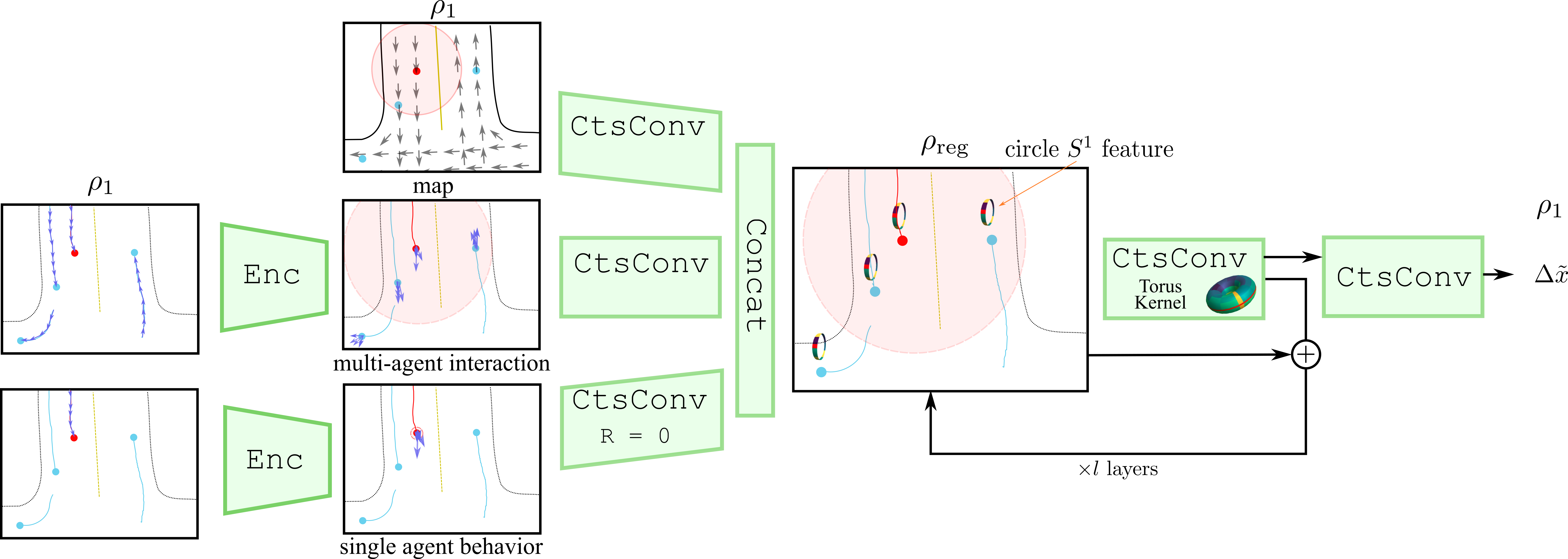}
    \caption{Overview of model architecture.  Past velocities are aggregated by an encoder $\mathtt{Enc}$. Together with map information this is then encoded by 3 \texttt{CtsConv}s into $\rhoreg$ features.  Then $l+1$ \texttt{CtsConv} layers are used to predict $\Delta \tilde{x}$. The predicted position $\hat{x}_{t+1} = \Delta\tilde{x} + \tilde{x}$ where $\tilde{x}$ is a numerically extrapolated using velocity and accleration.  Since $\Delta\tilde{x}$ is translation invariant, $\hat{x}$ is equivariant. } 
    \label{fig:overview}
\end{figure}

The high-level architecture of \ours{} is illustrated in \autoref{fig:overview}.  It is important to remember that the input, output, and hidden layers are all vector fields over the particles.  Oftentimes, there is also environmental information available in the form of road lane markers. Denote marker positions by $\rvx_{\mathrm{map}}$ and direction vectors by $\rvv_{\mathrm{map}}$.  This data is thus also a particle field, but static.

\rwa{constrast ummenhofer}


To design an equivariant network, one must choose  the group representation.  This choice plays an important role in shaping the learned hidden states. 
 We focus on two representations of $\SOtwo$: $\rho_1$ and $\rhoreg$.   The representation $\rho_1$  is that  of our input 
 features, and $\rhoreg$
 is for the hidden layers. 
 For $\rho_1$, we constrain the kernel in \Eqref{eqn:ctsconv}. For $\rhoreg$, we further introduce a new operator, convolution with torus kernels.
 

In order to make continuous convolution rotationally equivariant, we translate the general condition  for discrete  CNNs developed in \citet{weiler2019e2cnn} to continuous convolution.  We define the convolution kernel $K$ in polar coordinates $K(\theta,r)$.
Let $\mathbb{R}^{c_\mathrm{out}}$ and $\mathbb{R}^{c_\mathrm{in}}$ be $\mathrm{SO}(2)$-representations $\rho_\mathrm{out}$ and $\rho_\mathrm{in}$ respectively, then the equivariance condition requires the kernel to satisfy
\begin{equation}\label{eqn:e2-equ} 
K(\theta + \phi,r) = \rho_{\mathrm{out}}(\mathrm{Rot}_\theta) K(\phi,r) \rho_{\mathrm{in}}(\mathrm{Rot}_\theta^{-1}).
\end{equation}

Imposing such a constraint for continuous convolution requires us to develop an efficient weight sharing scheme for the kernels, which solve \Eqref{eqn:e2-equ}. 



\subsection{Weight Sharing by Orbits and Stabilizers.}
\label{subsec:orbits}

Given a point $\rvx \in \mathbb{R}^2$ and a group $G$, the set $O_\rvx = \lbrace g\rvx : g \in G \rbrace$ is the \textit{orbit} of the point $\rvx$.   The set of orbits gives a partition of $\mathbb{R}^2$ into the origin and circles of radius $r > 0$.   
The set of group elements $G_{\rvx} = \lbrace g : g\rvx = x \rbrace$   fixing $\rvx$ is called the \textit{stabilizer} of the point $\rvx$.
 We use the orbits and stabilizers  to constrain the weights of $K$.  Simply put, we share weights across orbits and constrain weights according to stabilizers, as shown in \autoref{fig:toruskernel}-Left.

The ray $D = \lbrace (0,r) : r \geq 0 \rbrace$ is a \textit{fundamental domain} for  the action of $G = \mathrm{SO}(2)$ on base space $\mathbb{R}^2$.
That is, $D$ contains exactly one point from each orbit.  We first define $K(0,r)$ for each $(0,r) \in D$.  Then we compute  $K(\theta,r)$ from $K(0,r)$ by setting $\phi=0$ in \Eqref{eqn:e2-equ} as such \begin{equation} \label{eqn:e2-equ-phi-0}
 K(\theta ,r) = \rho_{\mathrm{out}}(\mathrm{Rot}_\theta) K(0,r) \rho_{\mathrm{in}}(\mathrm{Rot}_\theta^{-1}).
\end{equation}

For $r > 0$, the group acts \textit{freely} on $(0,r)$, i.e. the stabilizer  contains only the identity.  This means that \Eqref{eqn:e2-equ} imposes no additional constraints on $K(0,r)$.  Thus $K(0,r) \in \mathbb{R}^{c_\mathrm{out} \times c_\mathrm{in}}$ is a matrix of freely learnable weights.  \rwa{[add illlustration]}

For $r = 0$, however, the orbit $O_{(0,0)}$ is only one point.  The stabilizer of $(0,0)$ is all of $G$, which requires
\begin{equation}\label{eqn:constraint}
K(0,0) = \rho_{\mathrm{out}}(\mathrm{Rot}_\theta) K(0,0) \rho_{\mathrm{in}}(\mathrm{Rot}_\theta^{-1}) \text{ for all $\theta$}.
\end{equation}
Thus $K(0,0)$ is an equivariant per-particle linear map $\rho_{\mathrm{in}} \to \rho_{\mathrm{out}}$. 




\begin{wraptable}{rb}{10cm}
\vspace{-5mm}
\centering
\caption{Equivariant linear maps for $K(0,0)$.    Trainable weights are $c \in \mathbb{R}$ and $\kappa \colon S^1 \to \mathbb{R}$, where $S^1$ is the manifold underlying $\SOtwo$.}
\label{table:equilinear}
\begin{tabular}{c|cc}
\toprule
  $ \rho_{\mathrm{in}}$ &$ \rho_{\mathrm{out}}= \rho_1$ & $\rho_{\mathrm{out}}=\rhoreg$ \\
\midrule
  $\rho_1$  & $(a,b) \mapsto (ca,cb)$ &  $c a \cos(\theta) +c b \sin(\theta)$\\
  $\rhoreg$ & $\rvf \mapsto c \left( \begin{array}{c}
       \int_{S^1} \rvf(\theta) \cos(\theta) d \theta  \\
       \int_{S^1} \rvf(\theta) \sin(\theta) d \theta)  
  \end{array} \right)$ &  $\int_{S^1} \kappa(\theta-\phi) \rvf(\phi)d\phi $  \\
\bottomrule
\end{tabular}
\vspace{-3mm}
\end{wraptable}

We can analytically solve \Eqref{eqn:constraint} for $K(0,0)$ using representation theory.
\autoref{table:equilinear} shows the unique solutions for different combinations of $\rho_1$ and $\rhoreg$. For details see \autoref{app:equilinear}.


\begin{figure}[t]
    \centering
    \includegraphics[width=6.2cm,trim=10 33 100 55, clip]{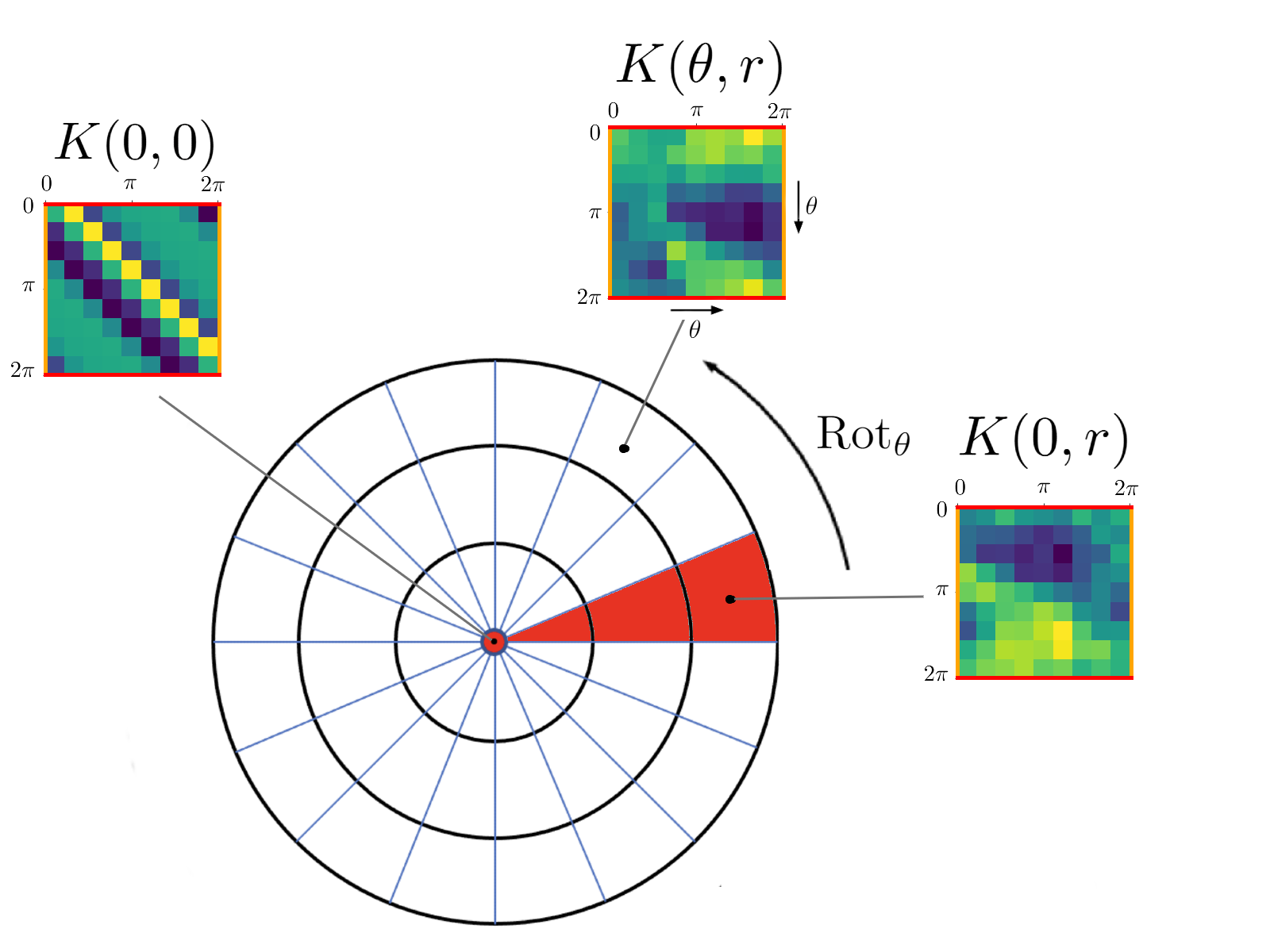} \qquad \quad
    \includegraphics[width=6.2cm,trim=10 10 10 40, clip]{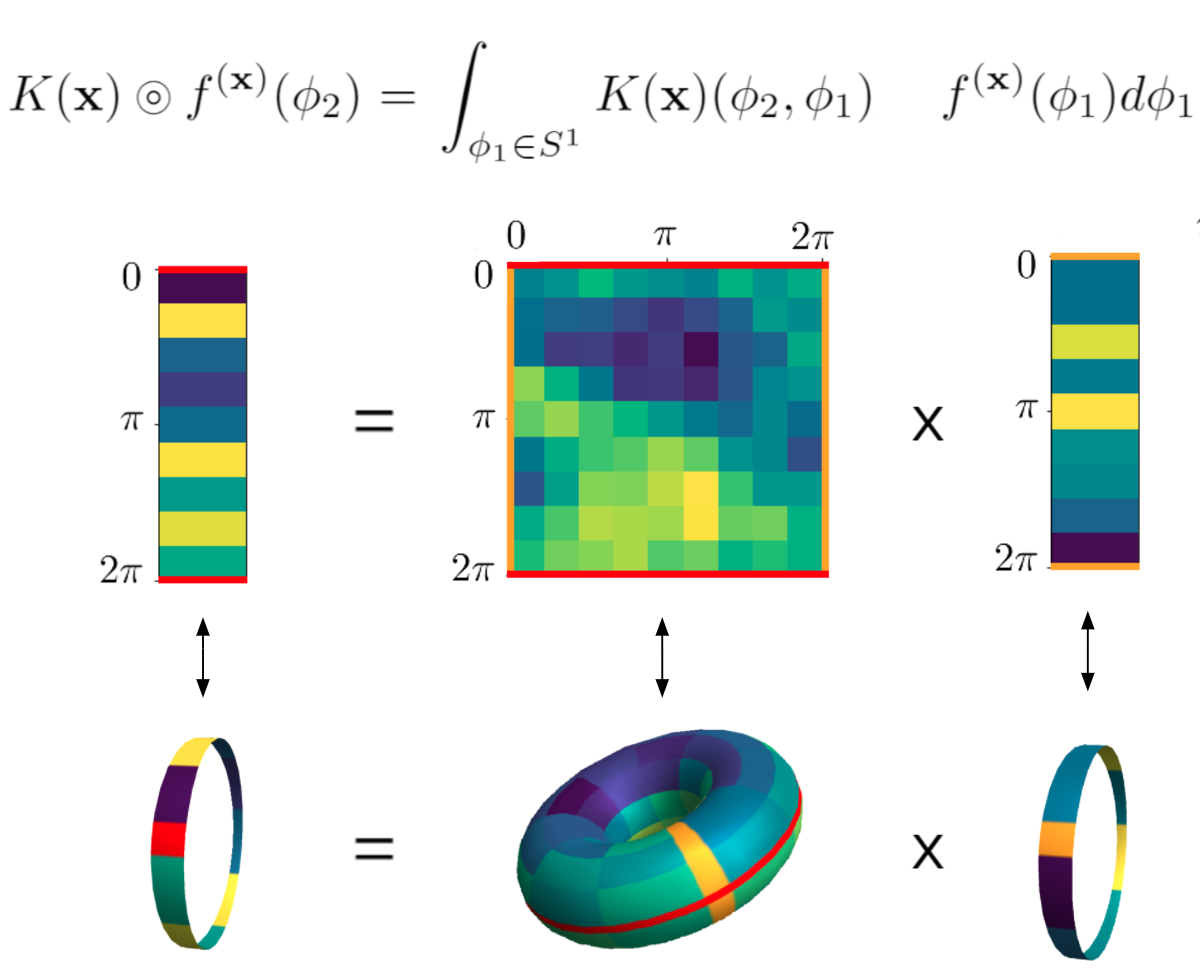}
    \caption{Left: A torus kernel field $K$ from a $\rhoreg$-field to a $\rhoreg$-field.  The kernel is itself a field: at each point $\rvx$ in space the kernel $K(\rvx)$ yields a different matrix.  \edits{We denote the $(\phi_2,\phi_1)$ entry of the matrix at $\rvx = (\theta,r)$ by $K(\theta,r)(\phi_2,\phi_1)$.}   The matrices along the red sector are freely trainable.  The matrices at all white sectors are determined by those in the red sector according to the circular shifting rule illustrated above.  The matrix at the red bullseye is trainable but constrained to be circulant, i.e. preserved by the circular shifting rule.  Right: The torus kernel acts on features which are functions on the circle.  By cutting open the torus and features along the reg and orange lines we can identify the operation at each point with matrix multiplication. 
    }
    \label{fig:toruskernel}
\end{figure}

\rwa{keep thinking about this para?}
Note that 2D and 3D rotation equivariant continuous convolutions are implemented in \citet{worrall2017harmonic} and \citet{thomas2018tensor} respectively. They both use harmonic functions which require expensive evaluation of analytic functions at each point. Instead, we provide a simpler solution.  We require only knowledge of the orbits, stabilizers, and input/output representations.  
Additionally, we bypass Clebsch-Gordon decomposition used in \citet{thomas2018tensor} by mapping directly between the representations in our network. 
Next, we describe an efficient  implementation of equivariant continuous convolution.\rwa{double check}


\subsection{Polar Coordinate Kernels}

Rotational equivariance informs our kernel discretization and implementation.  We store the kernel $K$ of continuous convolution as a 4-dimensional tensor by discretizing the domain.   Specifically, we discretize $\mathbb{R}^2$ using polar coordinates with $k_\theta$ angular slices and $k_r$ radial steps.  We then evaluate $K$ at any $(\theta,r)$ using bilinear interpolation from four closest polar grid points.   This method accelerates computation since we do not need to use \Eqref{eqn:e2-equ-phi-0} to repeatedly compute $K(\theta,r)$ from $K(0,r)$.  The special case of $K(0,0)$ results in a polar grid with a ``bullseye'' at the center (see \autoref{fig:toruskernel}-Left). 

We discretize angles finely and radii more coarsely. This choice is inspired by real-world observation that drivers tend to be more  sensitive to the angle of an incoming car than  its exact distance, 
Our equivariant kernels are computationally efficient and have very few parameters. 
Moreover, we will discuss later in \Secref{sec:equivarianceerror} that despite discretization, the use of polar coordinates allows for very low equivariance error.  








\subsection{Hidden Layers as Regular Representations }  
\label{subsec:toruskernels}





Regular representation $\rhoreg$  has shown  better performance than $\rho_1$  for finite groups \citep{cohen2019general,weiler2019e2cnn}. 
But the naive $\rhoreg=\{ \varphi \colon G \to \mathbb{R} \}$ for an infinite group $G$ is too large to work with.
We choose the space of square-integrable functions $L^2(G)$. It  contains all irreducible representations of $G$   and is compatible with pointwise non-linearities.






\paragraph{Discretization.}
However, $L^2(\SOtwo)$ is still infinite-dimensional.  We resolve this by discretizing the manifold $S^1$ underlying $\mathrm{SO}(2)$ into $k_{\mathrm{reg}}$ even intervals. We represent functions $f \in L^2(\SOtwo)$ by the vector of values $[f(\mathrm{Rot}_{2 \pi i /k_{\mathrm{reg}}})]_{0 \leq i < k_{\mathrm{reg}}}$.  We then evaluate $f(\mathrm{Rot}_{\theta})$  using interpolation.

We separate the number of angular slices $k_{\theta}$ and the size of the kernel  $ k_{\mathrm{reg}}$. If we tie them together and set  $k_{\theta} = k_{\mathrm{reg}}$, this  is equivalent to implementing cyclic group $C_{k_{\mathrm{reg}}}$ symmetry with the regular representation. Then increasing $k_{\theta}$ would also increases $k_{\mathrm{reg}}$, which incurs more parameters.



\paragraph{Convolution with Torus Kernel.}
In addition to constraining the kernel $K$ of \Eqref{eqn:ctsconv} as in $\rho_1$,  $\rhoreg$  poses an additional challenge as it is a function on a circle. 
 %
 We introduce a new operator from functions on the circle to functions on the circle called a torus kernel.

 First, we replace input feature vectors in $\rvf \in \mathbb{R}^c$ with elements of $L^2(\SOtwo)$.  The input feature $\rvf$ becomes a $\rhoreg$-field, that is, for each $\rvx \in \mathbb{R}^2$, $\rvf^{(\rvx)}$ is a real-value function on the circle $S^1 \to \mathbb{R}$.   For the kernel $K$, we replace the matrix field with a map $K \colon \mathbb{R}^2 \to \rhoreg \otimes \rhoreg$.  Instead of a matrix, $K(\rvx)$ is a map $S^1 \times S^1 \to \mathbb{R}$.  Here $(\phi_1,\phi_2) \in S^1 \times S^1$ plays the role of continuous matrix indices and we may consider $K(\rvx)(\phi_1,\phi_2) \in \mathbb{R}$ analogous to a matrix entry.     
Topologically, $S^1 \times S^1$ is a torus and hence we call $K(\rvx)$ a \emph{torus kernel.}  The matrix multiplication $K(\rvx) \cdot f^{(\rvx)}$ in \Eqref{eqn:ctsconv} must be replaced by the integral transform 
\begin{equation}
K(\rvx) \circledcirc f^{(\rvx)}(\phi_2) =  \int_{\phi_1 \in S^1} K(\rvx)(\phi_2, \phi_1) f^{(\rvx)} (\phi_1) d\phi_1,
\end{equation}
 which is a linear transformation $L^2(\SOtwo) \to L^2(\SOtwo)$. $K(\theta,r)(\phi_2,\phi_1)$ denotes the $(\phi_2,\phi_1)$ entry of the matrix at point $\rvx = (\theta,r)$, see the illustration in \autoref{fig:toruskernel}-Right. 
%
We compute \Eqref{eqn:e2-equ} for $\rhoreg \to \rhoreg$ as
$K(\mathrm{Rot}_{\theta}(\rvx))(\phi_2,\phi_1) = K(\rvx)(\phi_2 - \theta, \phi_1 - \theta).$
We can use the same weight sharing scheme as in \Secref{subsec:orbits}.

\subsection{Analysis: Equivariance Error}
\label{sec:equivarianceerror}

\begin{wrapfigure}{R}{5.5cm}
    \vspace{-1cm}
    \centering
    \includegraphics[width=5.5cm,trim=0 0 0 10, clip]{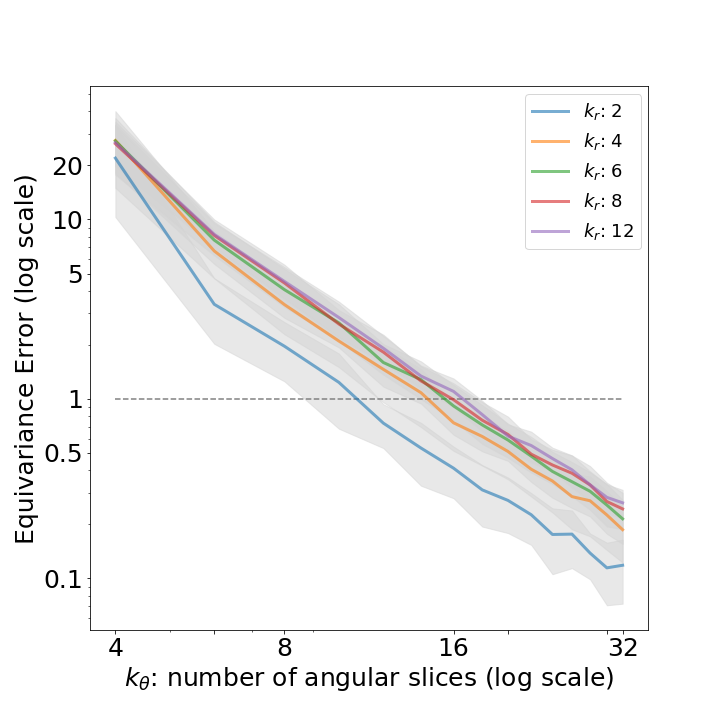}
    \caption{Experimentally, we find $k_\theta$ and expected equivariance error are inversely proportional.}
    \label{fig:equivariance-error}
    \vspace{-0.5cm}
\end{wrapfigure}

The practical value of equivariant neural networks has been demonstrated in a variety of domains.  However, theoretical analysis \citep{ kondor2018generalization, cohen2019general, maron2020sets} of continuous Lie group symmetries is usually performed assuming continuous functions and using the integral representation of the convolution operator. 
In practice, discretization 
can cause the model $f$ to be not exactly equivariant, with some equivariance error (EE) \[\mathrm{EE} = \|f(T(x)) - T'(f(x))\|\] \edits{with respect to group transformations $T$ and $T'$ of input and output respectively} \cite[ A6]{wang2020incorporating}.  \rwa{figure of rectangular grid.}  
Rectangular grids are well-suited to translations, but poorly-suited to rotations.  The resulting equivariance error can be so large  to practically undermine the advantages of a theoretically equivariant network.

Our polar-coordinate indexed circular filters are designed specifically to adapt well to the rotational symmetry.  In \autoref{fig:equivariance-error}, we demonstrate experimentally that expected $\mathrm{EE}$ is inversely proportional to the number of angular slices $k_\theta$.  For example,  choosing  $k_\theta \geq 16$ gives very low $\mathrm{EE}$ and does not increase the number of parameters.  We also prove for $\rho_1$ features that the equivariance error is low in expectation. 
See Appendix \ref{app:equerror} for the precise statement and proof.  

\begin{prop} Let $\alpha = 2\pi/k_\theta$, and $\bar{\theta}$ be $\theta$ rounded to nearest value in $\mathbb{Z} \alpha$, and $\hat{\theta} = |\theta - \bar{\theta}|.$ Let $F = \texttt{CtsConv}_{K,R}$ and $\edits{T} = \rho_1(\mathrm{Rot}_\theta)$. For some constant $C$, the expected $\mathrm{EE}$ is bounded
\[
\mathbb{E}_{K,\rvf,\rvx}[T(F(\rvf,\rvx)) - F(T(\rvf),T(\rvx))] \leq  | \sin(\hat{\theta})| C \leq 2 \pi C / k_\theta.  
\]
\end{prop}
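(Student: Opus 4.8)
The plan is to reduce the equivariance error to a purely kernel-level defect, use that the discretized kernel is \emph{exactly} equivariant under the cyclic subgroup of rotations by $\alpha$, and then watch the factor $|\sin\hat\theta|$ fall out of the $2\times2$ rotation matrix of the leftover angle $\hat\theta$. Concretely, I would first expand both terms with \eqref{eqn:ctsconv}: writing $\rvy_j=\rvx^{(j)}-\rvx^{(i)}$ and using that $\Rot_\theta$ acts linearly and preserves norms, the attention weights $a(\|\Rot_\theta\rvy_j\|)=a(\|\rvy_j\|)$ survive the rotation on both sides, so
\[
 T\bigl(F(\rvf,\rvx)\bigr)^{(i)}-F(T\rvf,T\rvx)^{(i)}=\sum_j a(\|\rvy_j\|)\bigl[\rho_1(\Rot_\theta)K(\rvy_j)-K(\Rot_\theta\rvy_j)\rho_1(\Rot_\theta)\bigr]\rvf^{(j)} .
\]
It then suffices to bound the kernel defect $\Delta_\theta(\rvy):=\rho_1(\Rot_\theta)K(\rvy)-K(\Rot_\theta\rvy)\rho_1(\Rot_\theta)$ for each $\rvy$ with $\|\rvy\|\le R$ and sum.

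Next I would bring in the discretization. By the weight-sharing construction of \Secref{subsec:orbits}, the stored kernel is exactly equivariant under $\{\Rot_{m\alpha}:m\in\mathbb Z\}$; writing $\bar\psi$ for the grid angle used when $K$ is evaluated at a point of polar angle $\psi$ (nearest-neighbour rounding into $\mathbb Z\alpha$ for concreteness; the bilinear interpolation actually used only shrinks the defect), \eqref{eqn:e2-equ-phi-0} gives $K(\psi,r)=\rho_1(\Rot_{\bar\psi})K(0,r)\rho_1(\Rot_{-\bar\psi})$. With $\theta=\bar\theta+\hat\theta'$, $\bar\theta\in\mathbb Z\alpha$, $|\hat\theta'|=\hat\theta\le\alpha/2$, and the rotated evaluation angle equal to $\bar\psi+\bar\theta$ (when $\Rot_\theta$ does not push the point's bin across a boundary), I would substitute the three instances of this identity, pull $\rho_1(\Rot_{\bar\theta+\bar\psi})$ to the front, and use that $\SOtwo$ is abelian (so $\rho_1(\Rot_{\hat\theta'})$ commutes with $\rho_1(\Rot_{-\bar\psi})$); the expression then collapses to a commutator conjugated by orthogonal matrices,
\[
 \Delta_\theta(\rvy)=\rho_1(\Rot_{\bar\theta+\bar\psi})\bigl[\rho_1(\Rot_{\hat\theta'}),\,K(0,\|\rvy\|)\bigr]\rho_1(\Rot_{-\bar\psi}),
\]
so $\|\Delta_\theta(\rvy)\|=\bigl\|[\rho_1(\Rot_{\hat\theta'}),K(0,\|\rvy\|)]\bigr\|$. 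The sine now appears from $\rho_1(\Rot_{\hat\theta'})=\cos\hat\theta'\,I+\sin\hat\theta'\,J$ with $J=\bigl(\begin{smallmatrix}0&-1\\1&0\end{smallmatrix}\bigr)$: the identity part commutes with $K(0,r)$, leaving $[\rho_1(\Rot_{\hat\theta'}),K(0,r)]=\sin\hat\theta'\,[J,K(0,r)]$ and hence $\|\Delta_\theta(\rvy)\|=|\sin\hat\theta|\,\|[J,K(0,\|\rvy\|)]\|$.

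Plugging this back, the $i$-th output block satisfies $\bigl\|\bigl(TF(\rvf,\rvx)-F(T\rvf,T\rvx)\bigr)^{(i)}\bigr\|\le|\sin\hat\theta|\sum_{j:\|\rvy_j\|\le R}|a(\|\rvy_j\|)|\,\|[J,K(0,\|\rvy_j\|)]\|\,\|\rvf^{(j)}\|$, and I would finish by summing over the finitely many particles and taking $\mathbb E_{K,\rvf,\rvx}$: the number of neighbours within the attention radius $R$ has bounded expectation, $\mathbb E\|[J,K(0,r)]\|\le 2\,\mathbb E\|K(0,r)\|$ is controlled by the weight distribution, and $\mathbb E\|\rvf^{(j)}\|$ is bounded, so everything folds into a single constant $C$ (depending on $R$, $a$, and the data/weight distributions), giving $\mathbb E[\mathrm{EE}]\le|\sin\hat\theta|\,C$. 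The second inequality is then immediate from $|\sin\hat\theta|\le\hat\theta\le\alpha/2=\pi/k_\theta\le2\pi/k_\theta$.

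The step I expect to be the real obstacle is the second one — making the discretization precise and, in particular, handling the exceptional event in which $\Rot_\theta$ carries a point's rounded angle across an angular bin boundary, where the telescoping produces a defect of size $O(\alpha)$ rather than $O(\hat\theta)$. This is exactly why the statement is an \emph{expectation} over $\rvx$ (and $K,\rvf$): that event has probability $O(\hat\theta/\alpha)$ under mild assumptions on the distribution of relative angles, so its contribution is $O(\hat\theta)\,C$, which is absorbed into the bound because $\hat\theta\le(\pi/2)\,|\sin\hat\theta|$ on $[0,\pi/2]$. Everything else is routine bookkeeping with $2\times2$ rotation matrices and the commutativity of $\SOtwo$.
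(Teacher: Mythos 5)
Your proposal follows essentially the same route as the paper's proof: reduce the error to a per-particle, per-channel kernel defect, use that the stored kernel is exactly equivariant on the angular grid so that only the rounding residue $\hat{\theta}$ contributes, and extract the factor $|\sin\hat{\theta}|$ from the leftover $2\times 2$ rotation --- your commutator identity $\sin\hat{\theta}\,[J,K(0,r)]$ is precisely the paper's explicit computation $\sqrt{((k_{21}+k_{12})^2+(k_{11}-k_{22})^2)(f_1^2+f_2^2)}\,|\sin\hat{\theta}|$ in coordinate-free form, and your final step $|\sin\hat{\theta}|\le\hat{\theta}\le 2\pi/k_\theta$ is identical. The one point where you are actually more careful than the paper is the angular-bin-boundary-crossing event, which the paper's argument silently omits; your probability-times-defect accounting for absorbing it into $C$ is a legitimate repair of that gap.
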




\section{Experiments}

In this section, we present experiments in two different domains, traffic and pedestrian trajectory prediction, where interactions among agents are frequent and influential. We first introduce the statistics of the datasets and the evaluation metrics. Secondly, we compare different feature encoders and hidden feature representation types. Lastly, we compare our model with baselines.

\subsection{Experimental Set Up}
\paragraph{Dataset}
We discuss the performances of our models on (1) Argoverse autonomous vehicle motion forecasting \citep{chang2019argoverse}, a recently released vehicle trajectory prediction benchmark, and (2) TrajNet++  pedestrian trajectory forecasting challenge \citep{kothari2020human}. For Argoverse, the task is to predict three-second trajectories based on all vehicles history in the past 2 seconds. We split 32K samples from the validation set as our test set. 
%
\paragraph{Baselines}
We compare against several state-of-the-art baselines used in Argoverse and TrajNet++. We use three original baselines from \citep{chang2019argoverse}: Constant velocity,  Nearest Neighbour, and  Long Short Term Memory (LSTM). We also compare with  a non-equivariant continuous convolutional model, CtsConv \citep{ummenhofer2019lagrangian}   and a hierarchical GNN  model VectorNet \citep{gao2020vectornet}. Note that VectorNet only predicts a single agent at a time, which is not directly comparable with ours. We include VectorNet as a reference nevertheless.
\paragraph{Evaluation Metrics}
 We use domain standard metrics to evaluate the trajectory prediction performance, including (1) Average Displacement Error (ADE): the average L2 displacement error for the whole 30 timestamps between prediction and ground truth, (2) Displacement Error at $t$ seconds (DE@ts): the L2 displacement error at a given timestep $t$. DE@ts for the last timestamp is also called Final Displacement Error (FDE).  For Argoverse, we report ADE and  DE@ts for $t \in \{1, 2, 3\}$. For TrajNet++, we report ADE and FDE.

\subsection{Prediction Performance Comparison}
We evaluate the performance of different models from multiple aspects: forecasting accuracy, parameter efficiency and the physical consistency of the predictions.   The goal is to provide a comprehensive view of various characteristics of our model to guide practical deployment.  See Appendix \ref{app:ablative} for an additional ablative study.

\paragraph{Forecasting Accuracy} We compare the trajectory prediction accuracy across different models on Argoverse and TrajNet++. \autoref{table:parameter-accuracy} displays the prediction ADE and FDE comparision. We can see that \ours{} with the regular representation $\rhoreg$ achieves on par or better forecasting accuracy on both datasets. Comparing \ours{} and a non-equivariant counterpart of our model CtsConv, we observe a significant 14.8\% improvement in forecasting accuracy. \edits{Compare with data augmentation,  we also observe a 9\% improvement over the non-equivariant CtsConv trained on random-rotation-augmented dataset.} These results demonstrate the benefits of incorporating equivariance principles into deep learning models.



\begin{table}[ht]
\begin{center}
\begin{tabular}{l|cccc|cc|c}
\toprule
\multirow{2}{*}{Model} & 
\multicolumn{4}{c|}{Argoverse} & 
\multicolumn{2}{c}{TrajNet++} & {\#Param}  \\
   &   ADE & DE@1s & DE@2s & DE@3s & ADE & FDE \\
\midrule
Constant Velocity  & 3.86 & 2.43 & 5.10 & 7.91 & 1.39 & 2.86 & -\\
Nearest Neighbor   & 3.49 & 2.02 & 4.98 & 7.84  & 1.38 & 2.79 & -\\
LSTM  & 2.13 & 1.16 & 2.81 & 4.83 & 1.11 & 2.03 & 50.6K\\
  CtsConv   & 1.85 & 0.99 & 2.42 & 4.32 & 0.86 & 1.79   & 1078.1K\\
  \edits{CtsConv (Aug.)}   & \edits{1.77} & \edits{0.96} & \edits{2.31} & \edits{4.05}
  & \edits{-} & \edits{-}   & \edits{1078.1K} \\
    $\rho_1$-\ours{}   & 1.70 & 0.93 & 2.22 & 3.89 & 0.88 & 1.83  & {51.4K}\\
     $\rhoreg$-\ours{}          & \textbf{1.62} & \textbf{0.89} & \textbf{2.12 }& \textbf{3.68} & \textbf{0.84} & \textbf{1.76} & 129.8K \\
     \midrule
     VectorNet  
            & 1.66 & 0.92 & 2.06 & 3.67 & - & - & 72K + Decoder\\
\bottomrule
\end{tabular}
\caption{Parameter efficiency and accuracy comparison. Number of parameters for each model  and their detailed forecasting accuracy at DE@ts. CtsConv(Aug.) is CtsConv trained with rotation augmented data.   
}
\label{table:parameter-accuracy}
\end{center}
\vspace{-2mm}
\end{table}

\vspace{-3mm}
\paragraph{Parameter Efficiency}
Another important feature in deploying deep learning models to embedded systems such as autonomous vehicles is parameter efficiency. 
We report the number of parameters in each of the models in \autoref{table:parameter-accuracy}. Compare with LSTM, our forecasting performance is significantly better. CtsConv and VectorNet have competitive forecasting performance, but uses much more  parameters than \ours{}. By encoding  equivariance into CtsConv, we drastically reduce the number of the parameters needed in our model. 
For VectorNet, \citet{gao2020vectornet} only provided the number of parameters for their encoder; a fair decoder size can be estimated based on MLP using 59 polygraphs with each 64 dimensions as input, predicting 30 timestamps, that is 113K.

\edits{
\vspace{-3mm}
\paragraph{Runtime and Memory Efficiency}
We compare the runtime and memory usage with VectorNet \cite{gao2020vectornet}. Since VectorNet is not open-sourced,  we compare with a version of VectorNet that we implement.  Firstly, we compare floating point operations (FLOPs).  VectorNet reported $n\times0.041$ GFLOPs for the encoder part of their model alone, where $n$ is the number of predicted vehicles.  We tested \ours{} on a scene with 30 vehicles and approximately 180 lane marker nodes, which is similar to the test conditions used to compute FLOPs in \cite{gao2020vectornet}. Our full model used 1.03 GFLOPs versus 1.23 GFLOPs for VectorNet’s encoder.
For runtimes on the same test machine, \ours{} runs 684ms versus 1103ms for VectorNet. Another disadvantage of VectorNet is needing to reprocess the scene for each agent, whereas \ours{} predicts all agents simultaneously. For memory usage in the same test \ours{} uses 296 MB and VectorNet uses 171 MB. 
}

\begin{wrapfigure}{R}{7 cm}
    \vspace{-0.5cm}
    \centering
    \includegraphics[width=7 cm,trim=0 0 20 10, clip]{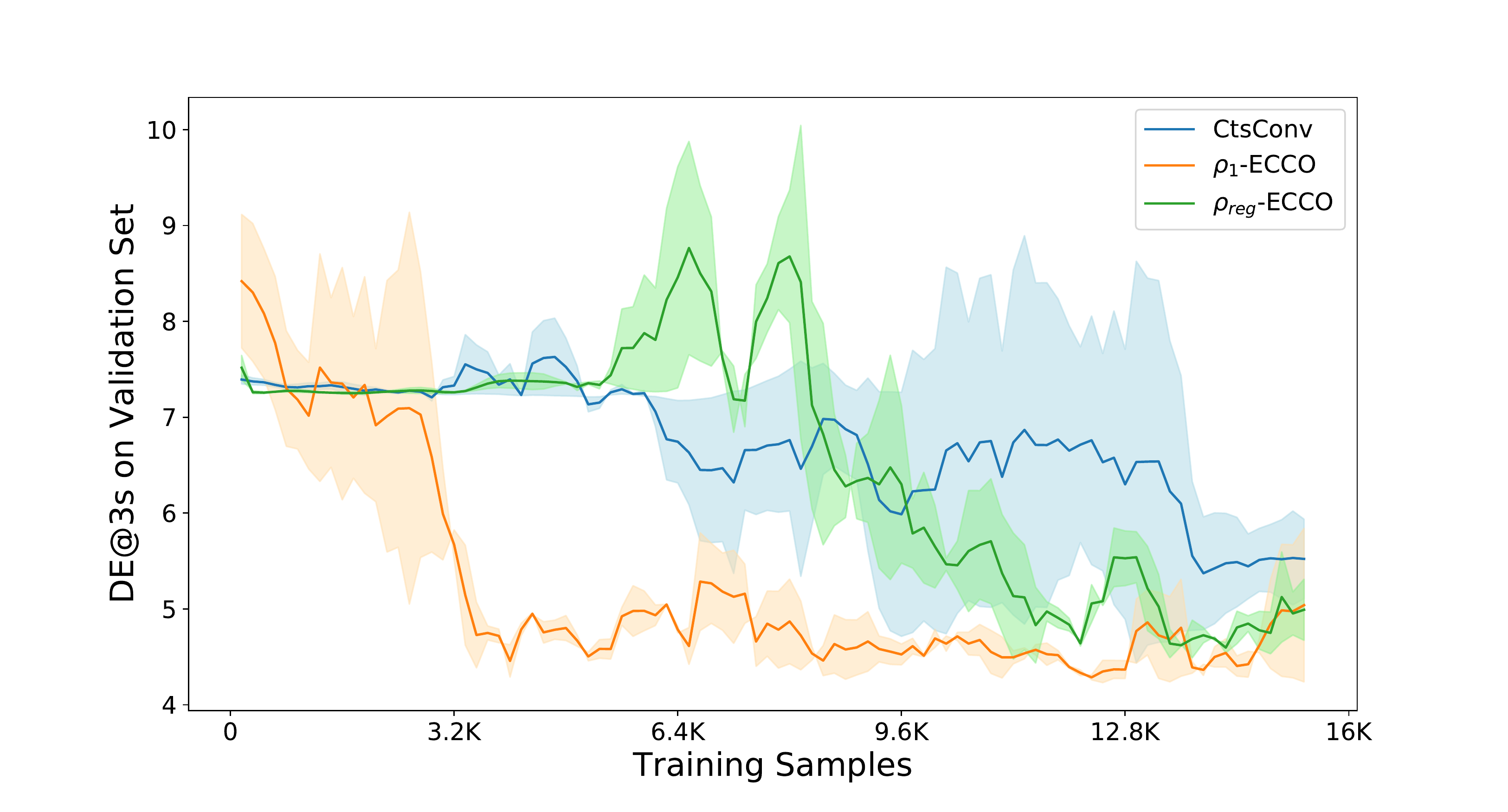}
      \vspace{-3mm}
    \caption{The learning curves on the validation set. Equivariant models converge faster using fewer samples than the non-equivariant models.}
    \label{fig:sample-efficiency}
    \vspace{-5mm}
\end{wrapfigure}

\vspace{-3mm}
\paragraph{Sample Efficiency}
A major benefit of incorporating the inductive bias of equivariance is to improve the sample efficiency of learning. For each sample which an equivariant model is trained on, it learns as if it were trained on all transformations of that sample by the symmetry group \cite[Prop 3]{wang2020incorporating}.  Thus \ours{} requires far fewer samples to learn from.  
In \autoref{fig:sample-efficiency}, we plot a comparison of validation FDE over number of training samples and show the equivariant models converge faster.

\begin{figure}[t]
    \centering
    \includegraphics[width=3.2cm,trim=40 40 40 40, clip]{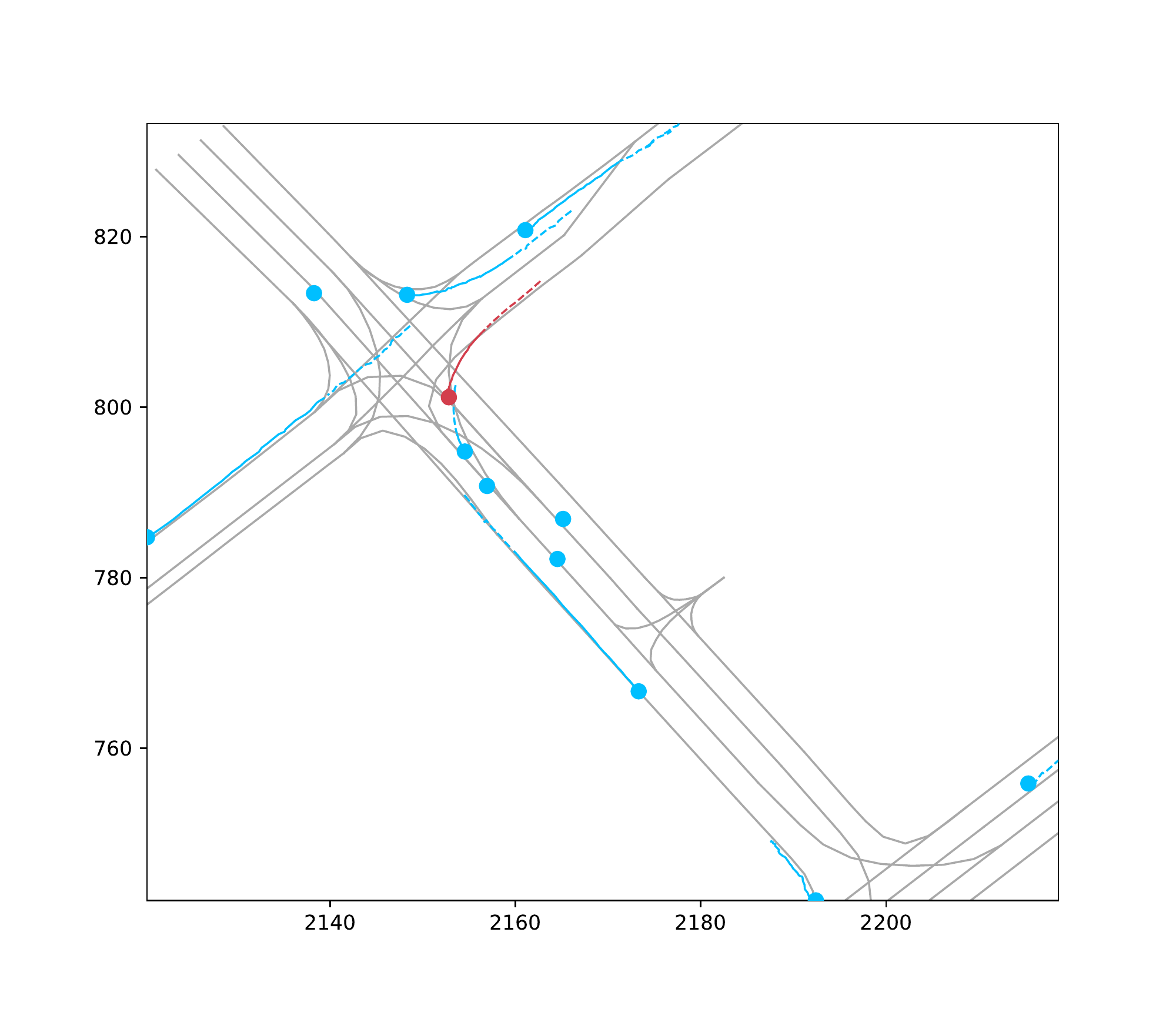} \!
    \includegraphics[width=3.2cm,trim=40 40 40 40, clip]{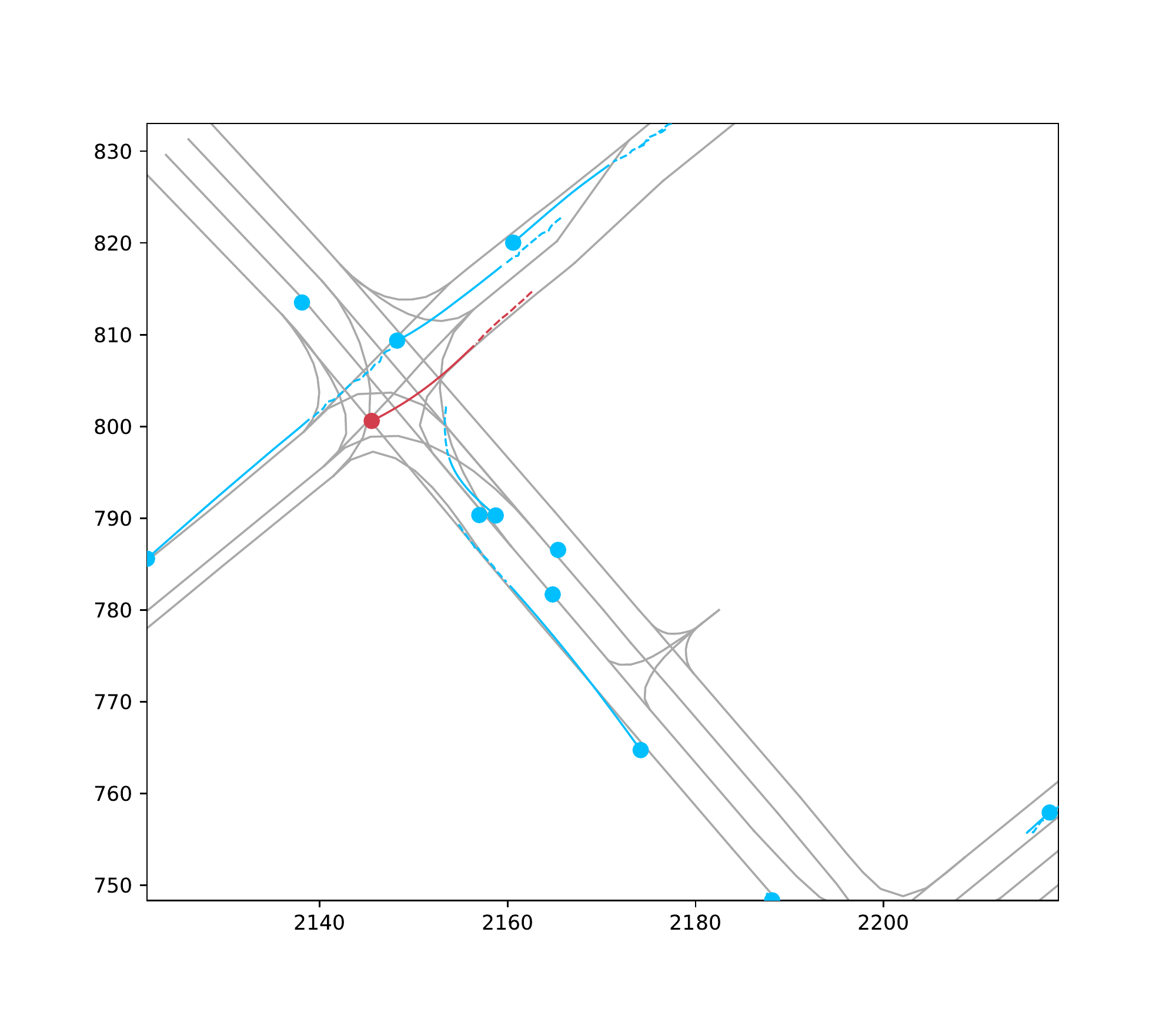} \! 
    \includegraphics[width=3.2cm,trim=40 40 40 40, clip]{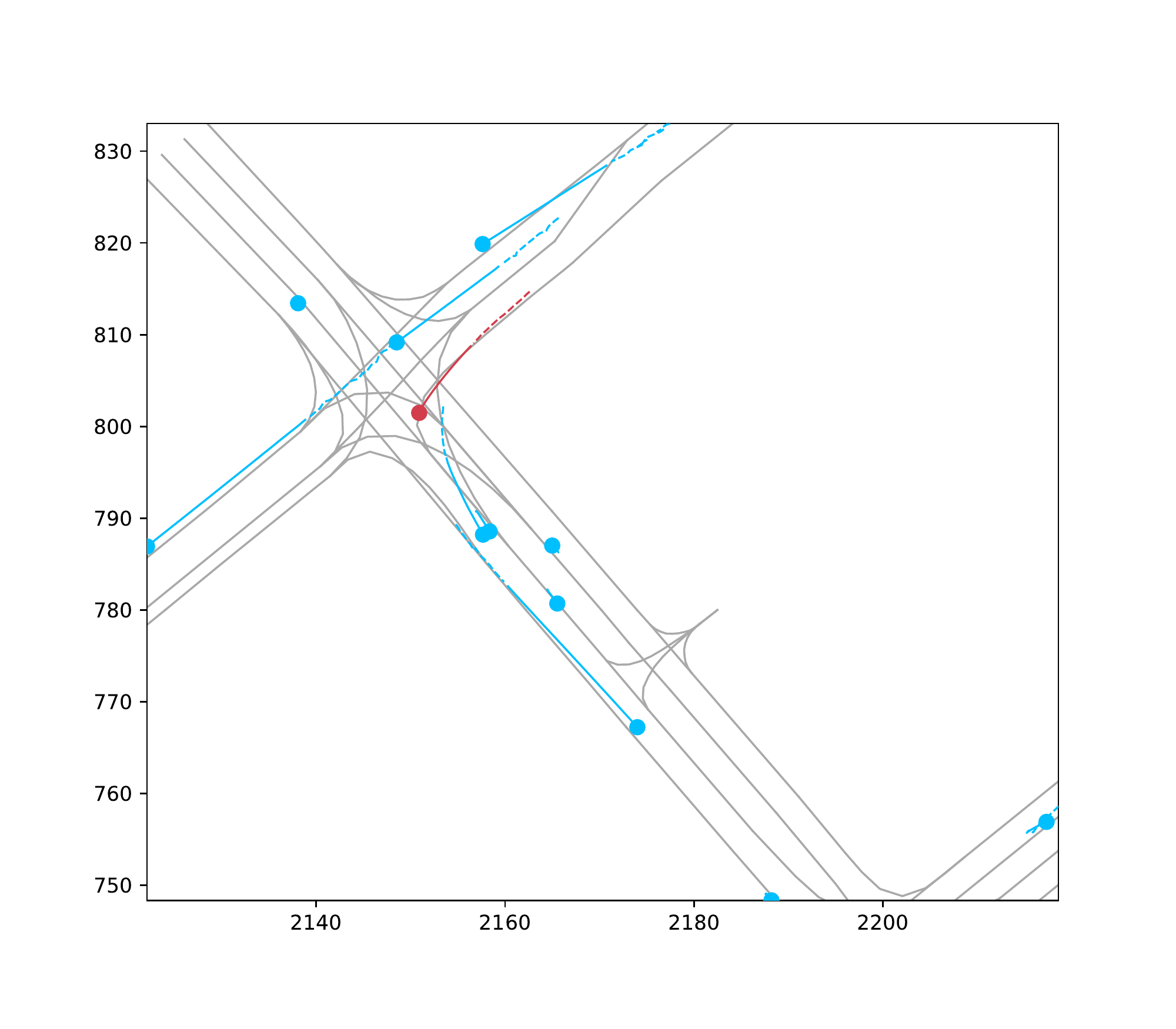} \!
    \includegraphics[width=3.2cm,trim=40 40 40 40, clip]{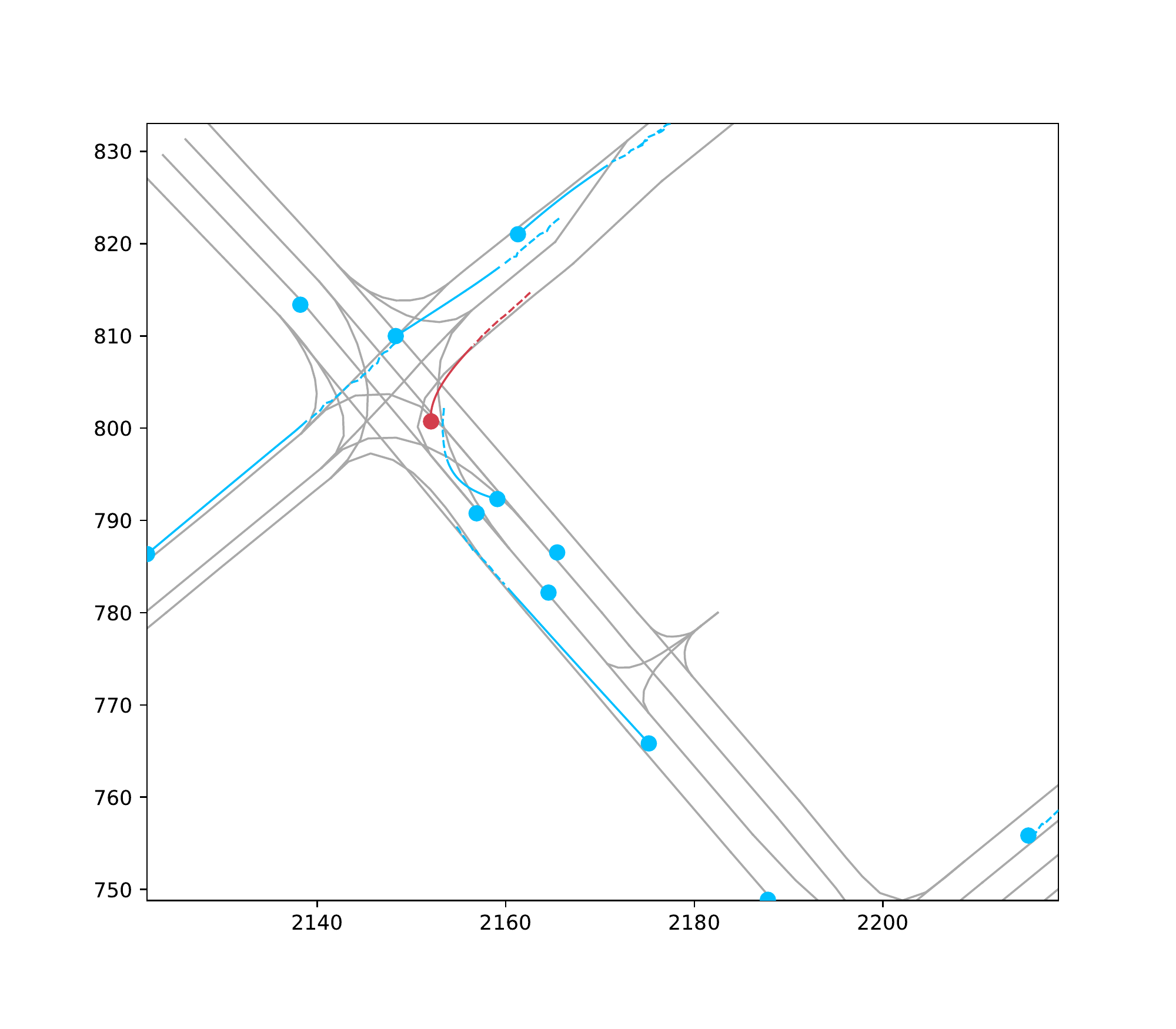} \! 
    \includegraphics[width=3.2cm,trim=40 40 40 40, clip]{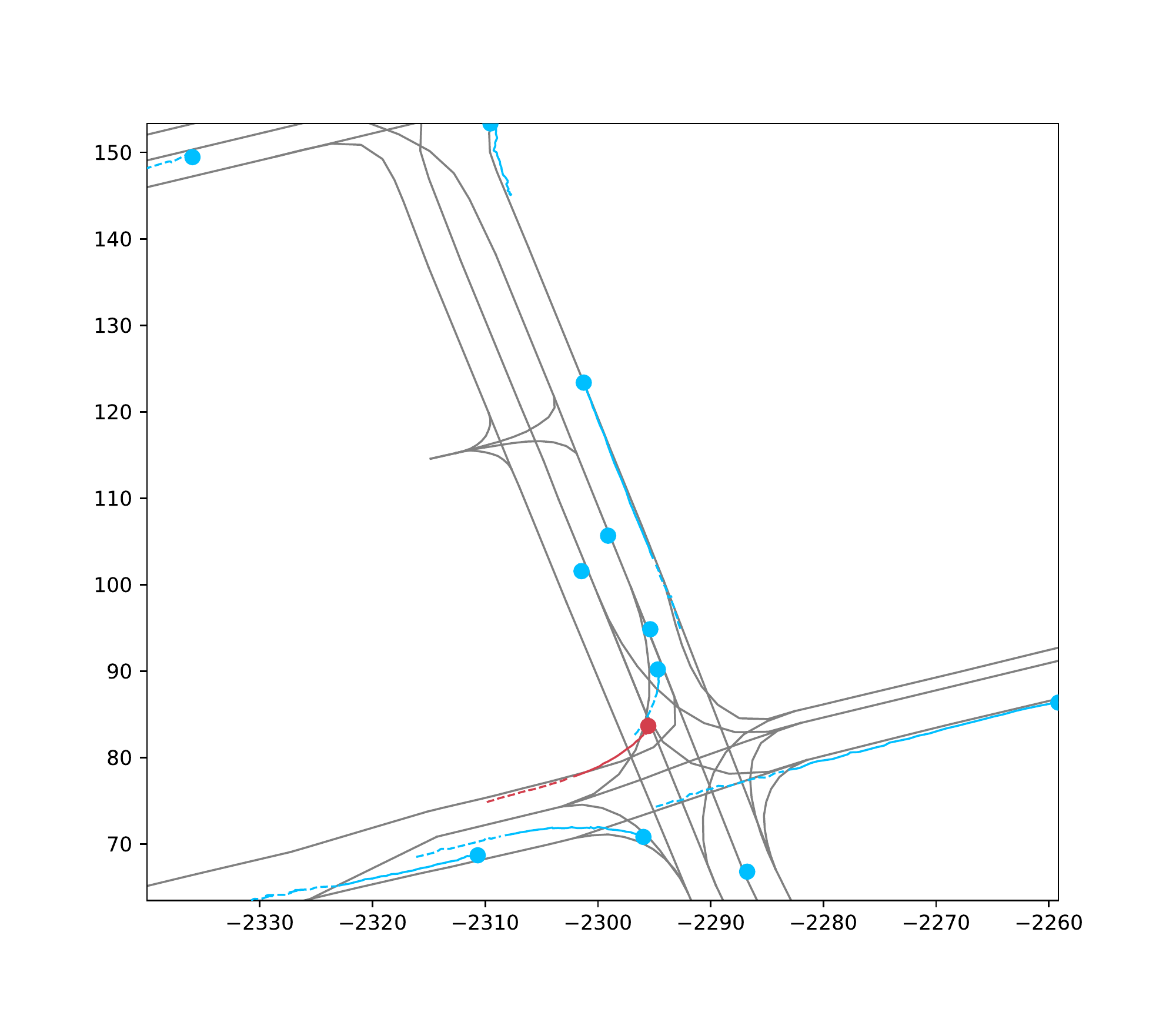} \!
    \includegraphics[width=3.2cm,trim=40 40 40 40, clip]{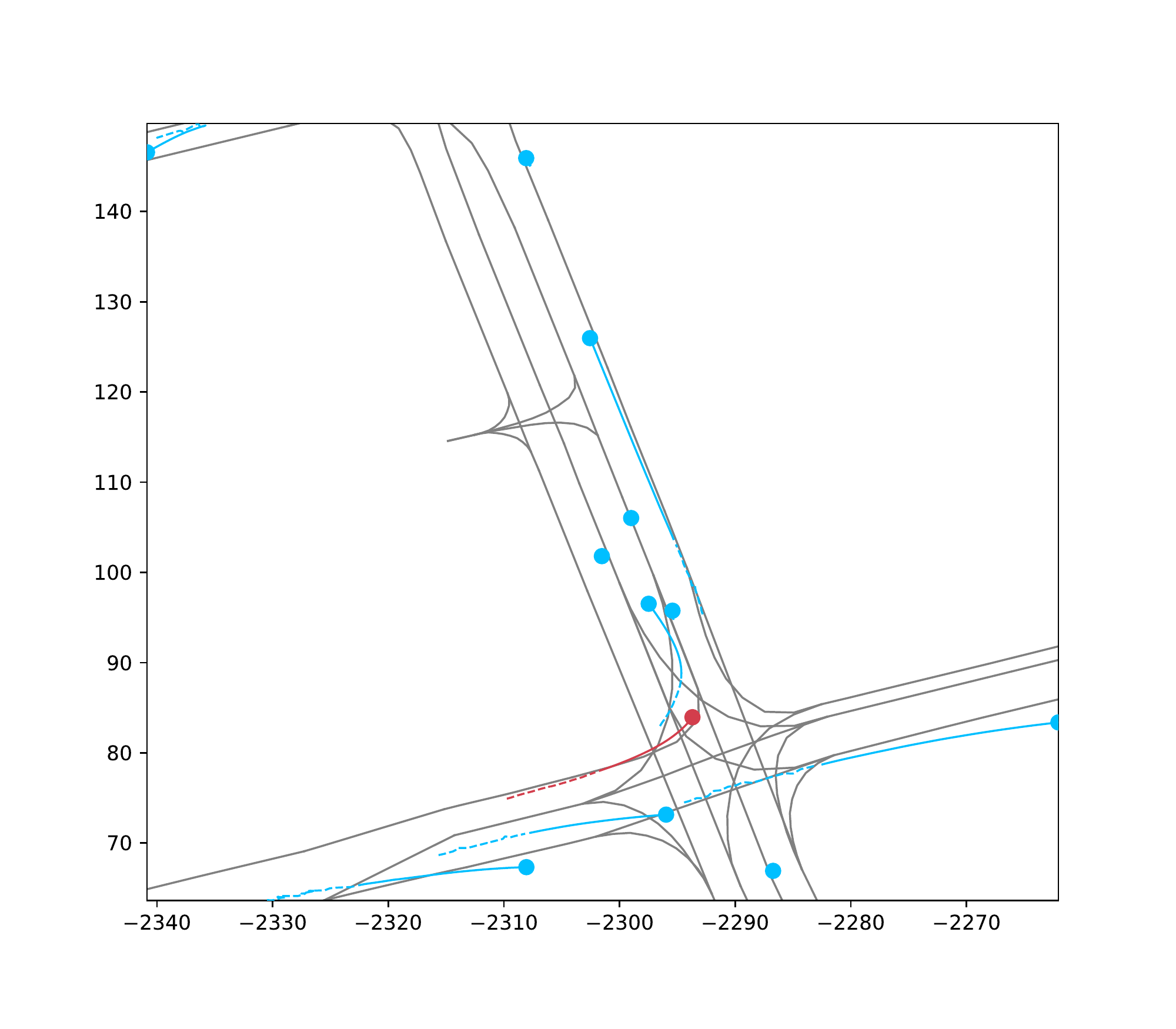} \! 
    \includegraphics[width=3.2cm,trim=40 40 40 40, clip]{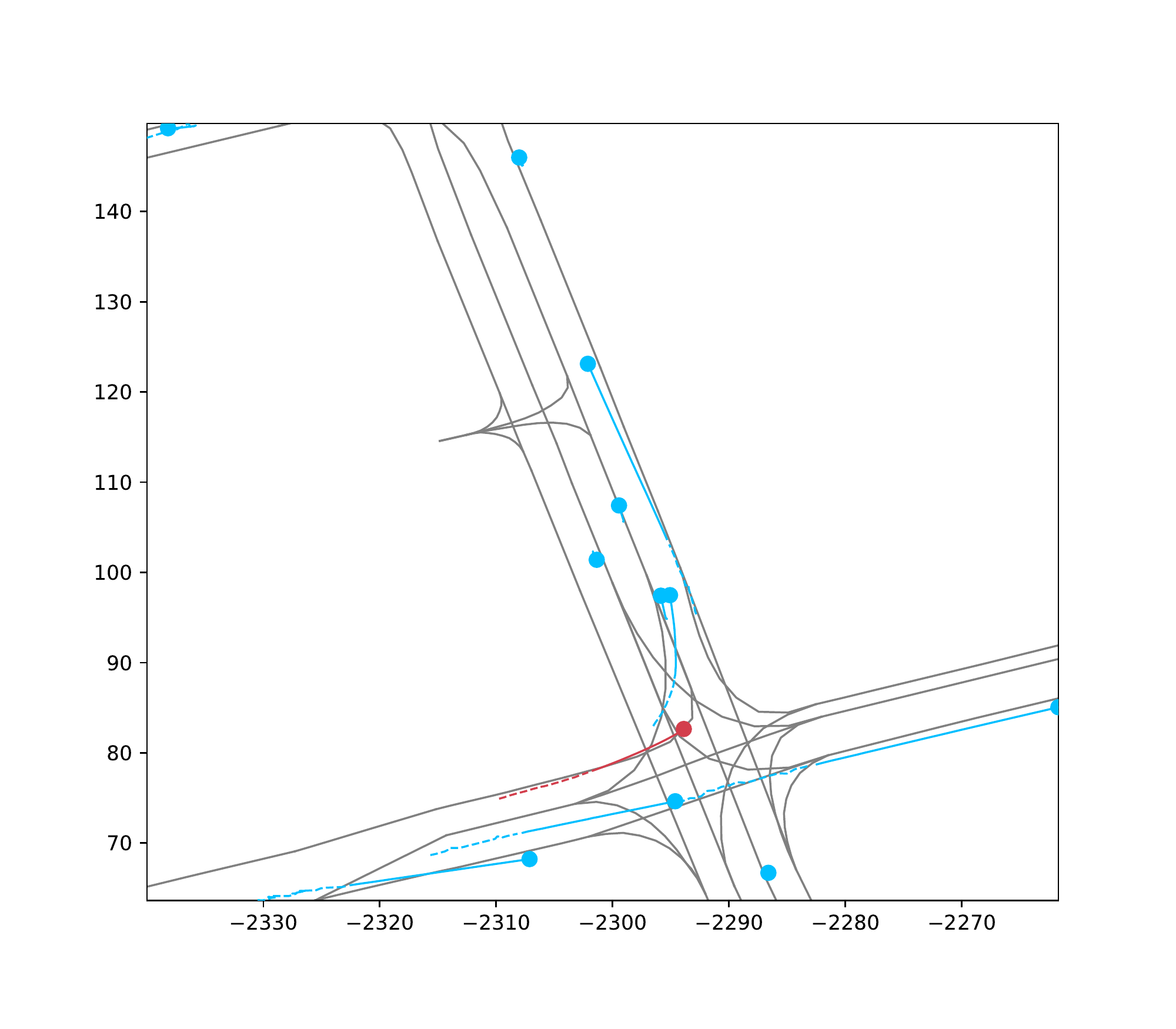} \!
    \includegraphics[width=3.2cm,trim=40 40 40 40, clip]{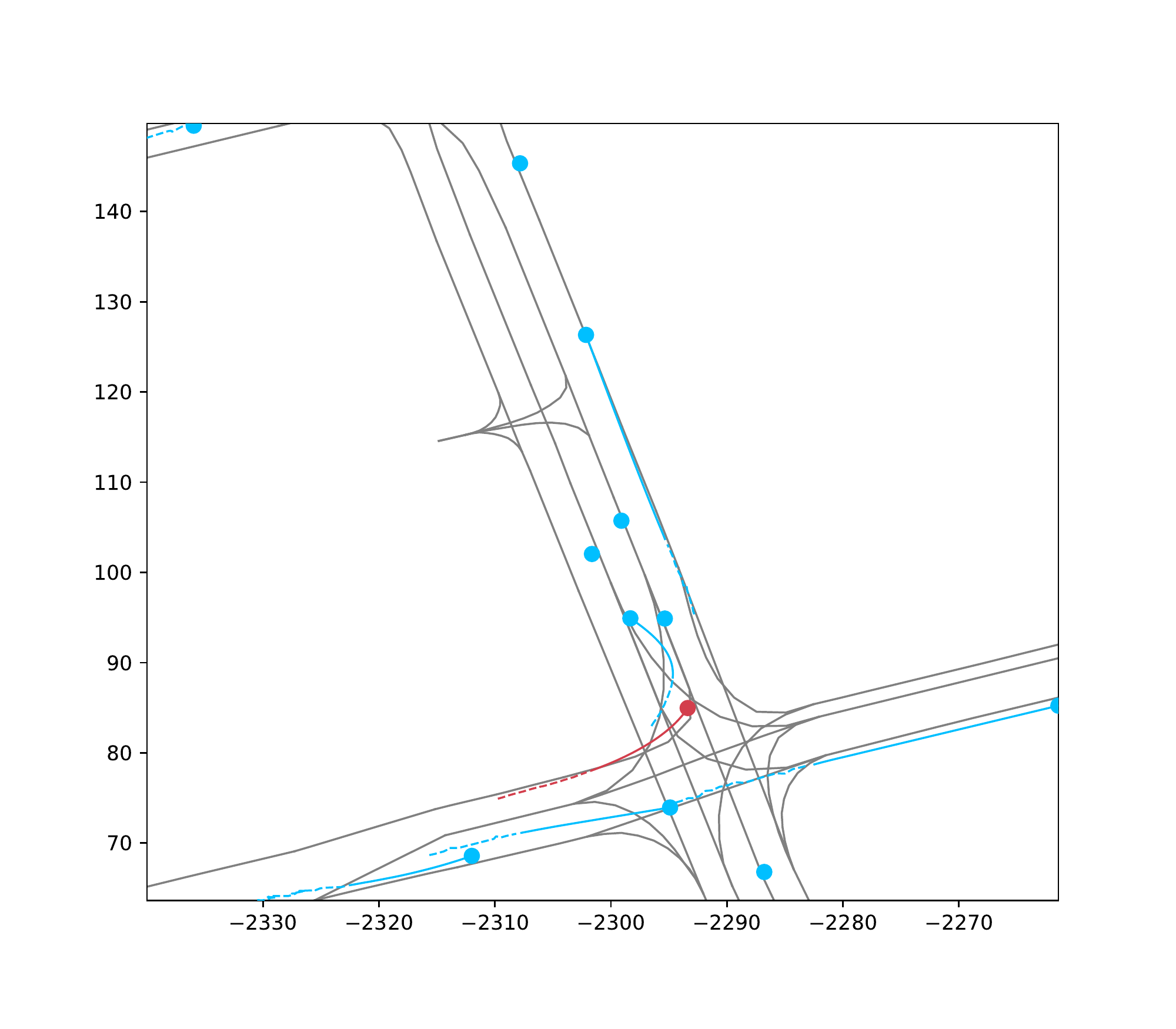} 
    \caption{The x,y-axes are the position (m). The dashed line represents the 2s past trajectory. The solid line represents the 3s prediction. Red represents the agent. Top row: The predictions are made on the original data. Bottom row: We rotate the whole scene by 160$^\circ$ and make predictions on rotated data. From left to right are visualizations of ground truth, CtsConv, $\rho_1$-ECCO, $\rhoreg$-ECCO. }
    \label{fig:prediction}
    \vspace{-4mm}
\end{figure}

\vspace{-3mm}
\paragraph{Physical Consistency}
We also visualize the predictions from \ours{} and non-equivariant CtsConv, as shown in \autoref{fig:prediction}. Top row visualizes the predictions  on the original data. In the bottom row, we rotate the whole scene by 160$^\circ$  and make predictions on rotated data.  This mimics the covariate shift in the real world. Note that CtsConv predicts inconsistently: a right turn in the top row but a left turn after the scene has been rotated.  We see similar results for TrajNet++ (see \autoref{fig:trajnet-prediction} in Appendix \ref{app:trajnet-visual}).


\vspace{-3mm}
\section{Conclusion}

We propose Equivariant Continuous Convolution (\ours{}), a novel model  for trajectory prediction  by imposing  symmetries  as inductive biases.  On two real-world vehicle and pedestrians trajectory datasets, \ours{} attains competitive accuracy  with significantly fewer parameters.   It is also more sample efficient; generalizing automatically from few data points in any orientation.  Lastly, equivariance gives \ours{} improved generalization performance. 
Our method provides a fresh perspective towards increasing \edits{trust in deep learning models through guaranteed properties}.  Future directions include applying equivariance to probabilistic predictions with  many possible trajectories, \edits{or developing a faster version of \ours{} which does not require autoregressive computation.}   
Moreover, our methods may be generalized from 2-dimensional space to $\mathbb{R}^n$.  The orbit-stabilizer weight sharing scheme and discretized regular representation may be generalized by replacing $\SO(2)$ with $\SO(n)$, and polar coordinate kernels may be generalized using spherical coordinates.

\vspace{-3mm}
\section*{Acknowledgement}
\vspace{-2mm}
This work was supported in part by Google Faculty Research Award,  NSF Grant \#2037745, and the U. S. Army Research Office under Grant W911NF-20-1-0334.  Walters is supported by a Postdoctoral Fellowship from the Institute for Experiential AI at the Roux Institute.

\bibliography{iclr2021_conference}

\begin{thebibliography}{52}
\providecommand{\natexlab}[1]{#1}
\providecommand{\url}[1]{\texttt{#1}}
\expandafter\ifx\csname urlstyle\endcsname\relax
  \providecommand{\doi}[1]{doi: #1}\else
  \providecommand{\doi}{doi: \begingroup \urlstyle{rm}\Url}\fi

\bibitem[Alahi et~al.(2016)Alahi, Goel, Ramanathan, Robicquet, Fei-Fei, and
  Savarese]{alahi2016social}
Alexandre Alahi, Kratarth Goel, Vignesh Ramanathan, Alexandre Robicquet,
  Li~Fei-Fei, and Silvio Savarese.
\newblock Social lstm: Human trajectory prediction in crowded spaces.
\newblock In \emph{Proceedings of the IEEE conference on computer vision and
  pattern recognition}, pp.\  961--971, 2016.

\bibitem[Atzmon et~al.(2018)Atzmon, Maron, and Lipman]{atzmon2018point}
Matan Atzmon, Haggai Maron, and Yaron Lipman.
\newblock Point convolutional neural networks by extension operators.
\newblock \emph{ACM Transactions on Graphics (TOG)}, 37\penalty0 (4):\penalty0
  71, 2018.

\bibitem[Bao \& Song(2019)Bao and Song]{bao2019equivariant}
Erkao Bao and Linqi Song.
\newblock Equivariant neural networks and equivarification.
\newblock \emph{arXiv preprint arXiv:1906.07172}, 2019.

\bibitem[Battaglia et~al.(2018)Battaglia, Hamrick, Bapst, Sanchez-Gonzalez,
  Zambaldi, Malinowski, Tacchetti, Raposo, Santoro, Faulkner,
  et~al.]{battaglia2018relational}
Peter~W Battaglia, Jessica~B Hamrick, Victor Bapst, Alvaro Sanchez-Gonzalez,
  Vinicius Zambaldi, Mateusz Malinowski, Andrea Tacchetti, David Raposo, Adam
  Santoro, Ryan Faulkner, et~al.
\newblock Relational inductive biases, deep learning, and graph networks.
\newblock \emph{arXiv preprint arXiv:1806.01261}, 2018.

\bibitem[Bekkers(2020)]{Bekkers2020B-Spline}
Erik~J Bekkers.
\newblock {B}-spline {CNN}s on {L}ie groups.
\newblock In \emph{International Conference on Learning Representations}, 2020.

\bibitem[Brown \& Lunter(2019)Brown and Lunter]{brown2019equivariant}
Richard~C Brown and Gerton Lunter.
\newblock An equivariant {B}ayesian convolutional network predicts
  recombination hotspots and accurately resolves binding motifs.
\newblock \emph{Bioinformatics}, 35\penalty0 (13):\penalty0 2177--2184, 2019.

\bibitem[Chang et~al.(2019)Chang, Lambert, Sangkloy, Singh, Bak, Hartnett,
  Wang, Carr, Lucey, Ramanan, et~al.]{chang2019argoverse}
Ming-Fang Chang, John Lambert, Patsorn Sangkloy, Jagjeet Singh, Slawomir Bak,
  Andrew Hartnett, De~Wang, Peter Carr, Simon Lucey, Deva Ramanan, et~al.
\newblock Argoverse: 3d tracking and forecasting with rich maps.
\newblock In \emph{Proceedings of the IEEE Conference on Computer Vision and
  Pattern Recognition}, pp.\  8748--8757, 2019.

\bibitem[Chidester et~al.(2018)Chidester, Do, and Ma]{chidester2018rotation}
Benjamin Chidester, Minh~N. Do, and Jian Ma.
\newblock Rotation equivariance and invariance in convolutional neural
  networks.
\newblock \emph{arXiv preprint arXiv:1805.12301}, 2018.

\bibitem[Cohen \& Welling(2016{\natexlab{a}})Cohen and Welling]{cohen2016group}
Taco~S. Cohen and Max Welling.
\newblock Group equivariant convolutional networks.
\newblock In \emph{International conference on machine learning (ICML)}, pp.\
  2990--2999, 2016{\natexlab{a}}.

\bibitem[Cohen \& Welling(2016{\natexlab{b}})Cohen and
  Welling]{cohen2016steerable}
Taco~S. Cohen and Max Welling.
\newblock Steerable {CNN}s.
\newblock \emph{arXiv preprint arXiv:1612.08498}, 2016{\natexlab{b}}.

\bibitem[Cohen et~al.(2019{\natexlab{a}})Cohen, Geiger, and
  Weiler]{cohen2019general}
Taco~S Cohen, Mario Geiger, and Maurice Weiler.
\newblock A general theory of equivariant {CNN}s on homogeneous spaces.
\newblock In \emph{Advances in Neural Information Processing Systems}, pp.\
  9142--9153, 2019{\natexlab{a}}.

\bibitem[Cohen et~al.(2019{\natexlab{b}})Cohen, Weiler, Kicanaoglu, and
  Welling]{cohen2019gauge}
Taco~S. Cohen, Maurice Weiler, Berkay Kicanaoglu, and Max Welling.
\newblock Gauge equivariant convolutional networks and the icosahedral {CNN}.
\newblock In \emph{Proceedings of the 36th International Conference on Machine
  Learning (ICML)}, volume~97, pp.\  1321--1330, 2019{\natexlab{b}}.

\bibitem[Dieleman et~al.(2016)Dieleman, Fauw, and
  Kavukcuoglu]{dieleman2016cyclic}
Sander Dieleman, Jeffrey~De Fauw, and Koray Kavukcuoglu.
\newblock Exploiting cyclic symmetry in convolutional neural networks.
\newblock In \emph{International Conference on Machine Learning (ICML)}, 2016.

\bibitem[Djuric et~al.(2018)Djuric, Radosavljevic, Cui, Nguyen, Chou, Lin, and
  Schneider]{djuric2018short}
Nemanja Djuric, Vladan Radosavljevic, Henggang Cui, Thi Nguyen, Fang-Chieh
  Chou, Tsung-Han Lin, and Jeff Schneider.
\newblock Short-term motion prediction of traffic actors for autonomous driving
  using deep convolutional networks.
\newblock \emph{arXiv preprint arXiv:1808.05819}, 2018.

\bibitem[Esteves et~al.(2018)Esteves, Allen-Blanchette, Zhou, and
  Daniilidis]{esteves2017polar}
Carlos Esteves, Christine Allen-Blanchette, Xiaowei Zhou, and Kostas
  Daniilidis.
\newblock Polar transformer networks.
\newblock In \emph{International Conference on Learning Representations
  (ICLR)}, 2018.

\bibitem[Finzi et~al.(2020)Finzi, Stanton, Izmailov, and
  Wilson]{finzi2020generalizing}
Marc Finzi, Samuel Stanton, Pavel Izmailov, and Andrew~Gordon Wilson.
\newblock Generalizing convolutional neural networks for equivariance to {L}ie
  groups on arbitrary continuous data.
\newblock \emph{arXiv preprint arXiv:2002.12880}, 2020.

\bibitem[Fuchs et~al.(2020)Fuchs, Worrall, Fischer, and Welling]{fuchs2020se}
Fabian~B Fuchs, Daniel~E Worrall, Volker Fischer, and Max Welling.
\newblock $\mathrm{SE} (3)$-transformers: 3{D} roto-translation equivariant
  attention networks.
\newblock \emph{arXiv preprint arXiv:2006.10503}, 2020.

\bibitem[Gao et~al.(2020)Gao, Sun, Zhao, Shen, Anguelov, Li, and
  Schmid]{gao2020vectornet}
Jiyang Gao, Chen Sun, Hang Zhao, Yi~Shen, Dragomir Anguelov, Congcong Li, and
  Cordelia Schmid.
\newblock Vectornet: Encoding {HD} maps and agent dynamics from vectorized
  representation.
\newblock In \emph{Proceedings of the IEEE/CVF Conference on Computer Vision
  and Pattern Recognition}, pp.\  11525--11533, 2020.

\bibitem[Hall(2015)]{hall2015lie}
Brian Hall.
\newblock \emph{Lie groups, Lie algebras, and representations: an elementary
  introduction}, volume 222.
\newblock Springer, 2015.

\bibitem[Hermosilla et~al.(2018)Hermosilla, Ritschel, V{\'a}zquez, Vinacua, and
  Ropinski]{hermosilla2018monte}
Pedro Hermosilla, Tobias Ritschel, Pere-Pau V{\'a}zquez, {\`A}lvar Vinacua, and
  Timo Ropinski.
\newblock Monte carlo convolution for learning on non-uniformly sampled point
  clouds.
\newblock \emph{ACM Transactions on Graphics (TOG)}, 37\penalty0 (6):\penalty0
  1--12, 2018.

\bibitem[Hochreiter \& Schmidhuber(1997)Hochreiter and
  Schmidhuber]{hochreiter1997long}
Sepp Hochreiter and J{\"u}rgen Schmidhuber.
\newblock Long short-term memory.
\newblock \emph{Neural computation}, 9\penalty0 (8):\penalty0 1735--1780, 1997.

\bibitem[Hua et~al.(2018)Hua, Tran, and Yeung]{hua2018pointwise}
Binh-Son Hua, Minh-Khoi Tran, and Sai-Kit Yeung.
\newblock Pointwise convolutional neural networks.
\newblock In \emph{Proceedings of the IEEE Conference on Computer Vision and
  Pattern Recognition}, pp.\  984--993, 2018.

\bibitem[Kanwar et~al.(2020)Kanwar, Albergo, Boyda, Cranmer, Hackett,
  Racani{\`e}re, Rezende, and Shanahan]{kanwar2020equivariant}
Gurtej Kanwar, Michael~S Albergo, Denis Boyda, Kyle Cranmer, Daniel~C Hackett,
  S{\'e}bastien Racani{\`e}re, Danilo~Jimenez Rezende, and Phiala~E Shanahan.
\newblock Equivariant flow-based sampling for lattice gauge theory.
\newblock \emph{arXiv preprint arXiv:2003.06413}, 2020.

\bibitem[Kesting et~al.(2010)Kesting, Treiber, and
  Helbing]{kesting2010enhanced}
Arne Kesting, Martin Treiber, and Dirk Helbing.
\newblock Enhanced intelligent driver model to access the impact of driving
  strategies on traffic capacity.
\newblock \emph{Philosophical Transactions of the Royal Society A:
  Mathematical, Physical and Engineering Sciences}, 368\penalty0
  (1928):\penalty0 4585--4605, 2010.

\bibitem[Kipf \& Welling(2017)Kipf and Welling]{kipf2017semisupervised}
Thomas~N. Kipf and Max Welling.
\newblock Semi-supervised classification with graph convolutional networks.
\newblock In \emph{International Conference on Learning Representations
  (ICLR)}, 2017.

\bibitem[Knapp(2013)]{knapp2013lie}
Anthony~W Knapp.
\newblock \emph{Lie groups beyond an introduction}, volume 140.
\newblock Springer Science \& Business Media, 2013.

\bibitem[Kondor \& Trivedi(2018)Kondor and Trivedi]{kondor2018generalization}
Risi Kondor and Shubhendu Trivedi.
\newblock On the generalization of equivariance and convolution in neural
  networks to the action of compact groups.
\newblock In \emph{Proceedings of the 35th International Conference on Machine
  Learning (ICML)}, volume~80, pp.\  2747--2755, 2018.

\bibitem[Kothari et~al.(2020)Kothari, Kreiss, and Alahi]{kothari2020human}
Parth Kothari, Sven Kreiss, and Alexandre Alahi.
\newblock Human trajectory forecasting in crowds: A deep learning perspective.
\newblock \emph{arXiv preprint arXiv:2007.03639}, 2020.

\bibitem[Lei et~al.(2019)Lei, Akhtar, and Mian]{lei2019octree}
Huan Lei, Naveed Akhtar, and Ajmal Mian.
\newblock Octree guided {CNN} with spherical kernels for 3{D} point clouds.
\newblock In \emph{Proceedings of the IEEE Conference on Computer Vision and
  Pattern Recognition}, pp.\  9631--9640, 2019.

\bibitem[Lenc \& Vedaldi(2015)Lenc and Vedaldi]{Lenc2015understanding}
Karel Lenc and Andrea Vedaldi.
\newblock Understanding image representations by measuring their equivariance
  and equivalence.
\newblock In \emph{Proceedings of the IEEE conference on computer vision and
  pattern recognition}, pp.\  991--999, 2015.

\bibitem[Li et~al.(2018)Li, Bu, Sun, Wu, Di, and Chen]{li2018pointcnn}
Yangyan Li, Rui Bu, Mingchao Sun, Wei Wu, Xinhan Di, and Baoquan Chen.
\newblock Pointcnn: Convolution on x-transformed points.
\newblock In \emph{Advances in neural information processing systems}, pp.\
  820--830, 2018.

\bibitem[Liang et~al.(2020)Liang, Yang, Hu, Chen, Liao, Feng, and
  Urtasun]{liang2020learning}
Ming Liang, Bin Yang, Rui Hu, Yun Chen, Renjie Liao, Song Feng, and Raquel
  Urtasun.
\newblock Learning lane graph representations for motion forecasting.
\newblock \emph{arXiv preprint arXiv:2007.13732}, 2020.

\bibitem[Maron et~al.(2020)Maron, Litany, Chechik, and Fetaya]{maron2020sets}
Haggai Maron, Or~Litany, Gal Chechik, and Ethan Fetaya.
\newblock On learning sets of symmetric elements.
\newblock In \emph{Proceedings of the 37th International Conference on Machine
  Learning (ICML)}, 2020.

\bibitem[Pipes(1966)]{pipes1966car}
Louis~Albert Pipes.
\newblock Car following models and the fundamental diagram of road traffic.
\newblock \emph{Transportation Research/UK/}, 1966.

\bibitem[Rudenko et~al.(2020)Rudenko, Palmieri, Herman, Kitani, Gavrila, and
  Arras]{rudenko2020human}
Andrey Rudenko, Luigi Palmieri, Michael Herman, Kris~M Kitani, Dariu~M Gavrila,
  and Kai~O Arras.
\newblock Human motion trajectory prediction: A survey.
\newblock \emph{The International Journal of Robotics Research}, 39\penalty0
  (8):\penalty0 895--935, 2020.

\bibitem[Sadeghian et~al.(2018)Sadeghian, Kosaraju, Gupta, Savarese, and
  Alahi]{sadeghiankosaraju2018trajnet}
Amir Sadeghian, Vineet Kosaraju, Agrim Gupta, Silvio Savarese, and Alexandre
  Alahi.
\newblock Trajnet: Towards a benchmark for human trajectory prediction.
\newblock \emph{arXiv preprint}, 2018.

\bibitem[Sanchez-Gonzalez et~al.(2020)Sanchez-Gonzalez, Godwin, Pfaff, Ying,
  Leskovec, and Battaglia]{sanchez2020learning}
Alvaro Sanchez-Gonzalez, Jonathan Godwin, Tobias Pfaff, Rex Ying, Jure
  Leskovec, and Peter~W Battaglia.
\newblock Learning to simulate complex physics with graph networks.
\newblock \emph{arXiv preprint arXiv:2002.09405}, 2020.

\bibitem[Schenck \& Fox(2018)Schenck and Fox]{schenck2018spnets}
Connor Schenck and Dieter Fox.
\newblock Spnets: Differentiable fluid dynamics for deep neural networks.
\newblock In \emph{Conference on Robot Learning}, pp.\  317--335, 2018.

\bibitem[Shah \& Romijnders(2016)Shah and Romijnders]{shah2016applying}
Rajiv Shah and Rob Romijnders.
\newblock Applying deep learning to basketball trajectories.
\newblock \emph{arXiv preprint arXiv:1608.03793}, 2016.

\bibitem[Su et~al.(2018)Su, Jampani, Sun, Maji, Kalogerakis, Yang, and
  Kautz]{su2018splatnet}
Hang Su, Varun Jampani, Deqing Sun, Subhransu Maji, Evangelos Kalogerakis,
  Ming-Hsuan Yang, and Jan Kautz.
\newblock Splatnet: Sparse lattice networks for point cloud processing.
\newblock In \emph{Proceedings of the IEEE Conference on Computer Vision and
  Pattern Recognition}, pp.\  2530--2539, 2018.

\bibitem[Thomas et~al.(2018)Thomas, Smidt, Kearnes, Yang, Li, Kohlhoff, and
  Riley]{thomas2018tensor}
Nathaniel Thomas, Tess Smidt, Steven Kearnes, Lusann Yang, Li~Li, Kai Kohlhoff,
  and Patrick Riley.
\newblock Tensor field networks: Rotation-and translation-equivariant neural
  networks for 3d point clouds.
\newblock \emph{arXiv preprint arXiv:1802.08219}, 2018.

\bibitem[Ummenhofer et~al.(2019)Ummenhofer, Prantl, Thuerey, and
  Koltun]{ummenhofer2019lagrangian}
Benjamin Ummenhofer, Lukas Prantl, Nils Thuerey, and Vladlen Koltun.
\newblock Lagrangian fluid simulation with continuous convolutions.
\newblock In \emph{International Conference on Learning Representations}, 2019.

\bibitem[Veres \& Moussa(2019)Veres and Moussa]{veres2019deep}
Matthew Veres and Medhat Moussa.
\newblock Deep learning for intelligent transportation systems: A survey of
  emerging trends.
\newblock \emph{IEEE Transactions on Intelligent transportation systems}, 2019.

\bibitem[Wang et~al.(2020)Wang, Walters, and Yu]{wang2020incorporating}
Rui Wang, Robin Walters, and Rose Yu.
\newblock Incorporating symmetry into deep dynamics models for improved
  generalization.
\newblock \emph{arXiv preprint arXiv:2002.03061}, 2020.

\bibitem[Wang et~al.(2018)Wang, Suo, Ma, Pokrovsky, and Urtasun]{wang2018deep}
Shenlong Wang, Simon Suo, Wei-Chiu Ma, Andrei Pokrovsky, and Raquel Urtasun.
\newblock Deep parametric continuous convolutional neural networks.
\newblock In \emph{Proceedings of the IEEE Conference on Computer Vision and
  Pattern Recognition}, pp.\  2589--2597, 2018.

\bibitem[Weiler \& Cesa(2019)Weiler and Cesa]{weiler2019e2cnn}
Maurice Weiler and Gabriele Cesa.
\newblock General {E}(2)-equivariant steerable {CNNs}.
\newblock In \emph{Advances in Neural Information Processing Systems
  (NeurIPS)}, pp.\  14334--14345, 2019.

\bibitem[Weiler et~al.(2018)Weiler, Hamprecht, and Storath]{weiler2018learning}
Maurice Weiler, Fred~A. Hamprecht, and Martin Storath.
\newblock Learning steerable filters for rotation equivariant {CNN}s.
\newblock \emph{Computer Vision and Pattern Recognition (CVPR)}, 2018.

\bibitem[Weng et~al.(2018)Weng, Wu, Beainy, and Kitani]{weng2018rotational}
Xinshuo Weng, Shangxuan Wu, Fares Beainy, and Kris~M Kitani.
\newblock Rotational rectification network: Enabling pedestrian detection for
  mobile vision.
\newblock In \emph{2018 IEEE Winter Conference on Applications of Computer
  Vision (WACV)}, pp.\  1084--1092. IEEE, 2018.

\bibitem[Worrall et~al.(2017)Worrall, Garbin, Turmukhambetov, and
  Brostow]{worrall2017harmonic}
Daniel~E Worrall, Stephan~J Garbin, Daniyar Turmukhambetov, and Gabriel~J
  Brostow.
\newblock Harmonic networks: Deep translation and rotation equivariance.
\newblock In \emph{Proceedings of the IEEE Conference on Computer Vision and
  Pattern Recognition}, pp.\  5028--5037, 2017.

\bibitem[Wu et~al.(2019)Wu, Qi, and Fuxin]{wu2019pointconv}
Wenxuan Wu, Zhongang Qi, and Li~Fuxin.
\newblock Pointconv: Deep convolutional networks on 3d point clouds.
\newblock In \emph{Proceedings of the IEEE Conference on Computer Vision and
  Pattern Recognition}, pp.\  9621--9630, 2019.

\bibitem[Xingjian et~al.(2015)Xingjian, Chen, Wang, Yeung, Wong, and
  Woo]{xingjian2015convolutional}
Shi Xingjian, Zhourong Chen, Hao Wang, Dit-Yan Yeung, Wai-Kin Wong, and
  Wang-chun Woo.
\newblock Convolutional lstm network: A machine learning approach for
  precipitation nowcasting.
\newblock In \emph{Advances in neural information processing systems}, pp.\
  802--810, 2015.

\bibitem[Xu et~al.(2018)Xu, Fan, Xu, Zeng, and Qiao]{xu2018spidercnn}
Yifan Xu, Tianqi Fan, Mingye Xu, Long Zeng, and Yu~Qiao.
\newblock Spidercnn: Deep learning on point sets with parameterized
  convolutional filters.
\newblock In \emph{Proceedings of the European Conference on Computer Vision
  (ECCV)}, pp.\  87--102, 2018.

\end{thebibliography}
\bibliographystyle{iclr2021_conference}

\newpage

\appendix
\section{Appendix}

\subsection{Continuous convolution involving \texorpdfstring{$\rhoreg$}{rho reg}}
\label{app:rhoregctsconv}

This section is a more detailed version of \Secref{subsec:toruskernels}.

Define the input $\rvf$ to be $\rhoreg$-field, that is, a distribution over $\mathbb{R}^2$ valued in $\rhoreg$.  Define $K \colon \mathbb{R}^2 \to \rhoreg \otimes \rhoreg$.  After identifying $\SOtwo$ with its underlying manifold $S^1$, we can identify $K(\rvx)$ as a map $S^1 \times S^1 \to \mathbb{R}$ and $ f^{(\rvx)} \colon S^1 \to \mathbb{R}$.
Define the integral transform 
\[
K(\rvx) \circledcirc f^{(\rvx)}(\phi_2) =  \int_{\phi_1 \in S^1} K(\rvx)(\phi_2, \phi_1) f^{(\rvx)} (\phi_1) d\phi_1.
\]

For $\rvy \in \mathbb{R}^2$, define the convolution $\rvg = K \star \rvf$ by 
\[
\rvg(y) = \int_{x \in \mathbb{R}^2} K(x) \circledcirc \rvf(x + y) dx.
\]

The $\circledcirc$-operation parameterizes linear maps $\rhoreg \to \rhoreg$ and is thus analogous to matrix multiplication.  If we chose to restrict our choice of $\kappa$ to $\kappa(\phi_2,\phi_1) = \tilde{\kappa}(\phi_2 - \phi_1)$ for some function $\tilde{\kappa} \colon S^1 \to \mathbb{R}$ then this becomes the circular convolution operation.

The $SO(2)$-action on $\rhoreg$ by $\mathrm{Rot}_\theta(f)(\phi) = f(\phi-\theta)$ induces an action on $\kappa \colon S^1 \times S^1 \to \mathbb{R}$ by 
\[
\mathrm{Rot}_\theta(\kappa)(\phi_2, \phi_1) = \kappa(\phi_2 - \theta, \phi_1 - \theta).
\]
This, in turn, gives an action on the torus-field $K$ by 
\[
\mathrm{Rot}_\theta(K)(x)(\phi_2,\phi_1) = K(\Rot_{-\theta}(x))(\phi_2 - \theta, \phi_1 - \theta). 
\]
Thus \Eqref{eqn:e2-equ}, the convolutional kernel constraint, implies that $K$ is equivariant if and only if 
\[
K(\mathrm{Rot}_{\theta}(x))(\phi_2,\phi_1) = K(x)(\phi_2 - \theta, \phi_1 - \theta).
\]
We use this to define a weight sharing scheme as described in \Secref{subsec:EquivContConv}.  The cases of continuous convolution $\rho_1 \to \rhoreg$ and $\rhoreg \to \rho_1$ may be derived similarly.

\subsection{Complexity of Convolution with Torus Kernel}
\label{app:complexity}
The complexity class of the convolution with torus kernel is $O(n\cdot k_{reg}^2\cdot c_{out} \cdot c_{in})$, where $n$ is the number of particles, the regular representation is discretized into $k_{reg}$ pieces, and the input and output contain $c_{in}$ and $c_{out}$ copies of the regular representation respectively. We are not counting the complexity of the interpolation operation for looking up $K(\theta,r)$.

\subsection{Equivariant Per-Particle Linear Layers}
\label{app:equilinear}

Since this operation is pointwise, unlike positive radius continuous convolution, we cannot map between different irreducible representations of $\mathrm{SO}(2)$.    Consider as input a $\rho_{\mathrm{in}}$-field $I$ and output a $\rho_{\mathrm{out}}$-field $O$ where $\rhoin$ and $\rhoout$ are finite-dimensional representations of $\SOtwo$.  We define $O^{(i)} = W I^{(i)}$ using the same $W$, an equivariant linear map, for each particle $1 \leq i \leq N$.  
Denote the decomposition of $\rhoin$ and $\rhoout$ into irreducible representations of $SO(2)$ as $\rhoin \cong \rho_1^{i_1} \oplus \ldots \oplus \rho_n^{i_n}$ and $\rhoout \cong \rho_1^{j_1} \oplus \ldots \oplus \rho_n^{j_n}$ respectively.  By Schur's lemma, the equivariant linear map  $W \colon \rhoin \to \rhoout$ is defined by a block diagonal matrix with blocks $\lbrace W_k \rbrace_{k=1}^n$ where $W_k$ is an $i_k \times j_k$ matrix.   That is, maps between different irreducible representations are zero and each map $\rho_k \to \rho_k$ is given by a single scalar.   

\paragraph{Per-particle linear mapping $\rho_1 \to \rho_\mathrm{reg}$ and $\rho_1 \to \rho_\mathrm{reg}$.}
Since the input and output features are $\rho_1$-fields, but the hidden features may be represented by $\rho_{\mathrm{reg}}$, we need mappings between $\rho_1$ and $\rho_{\mathrm{reg}}$.   In all cases we pair continuous convolutions with dense per-particle mappings, this we must describe per-particle mappings between $\rho_1$ and $\rho_\mathrm{reg}$.

By the Peter-Weyl theorem, $L^2(\SO(2)) \cong \bigoplus_{i = 0}^\infty \rho_i$.  In the case of $\SO(2)$, this decomposition is also called the Fourier decomposition or decomposition into circular harmonics.   Most importantly, there is one copy of $\rho_1$ inside of $L^2(\SO(2))$.  Hence, up to scalar, there is a unique linear map $i_{1} \colon \rho_1 \to L^2(\SO(2))$ given by $(a,b) \mapsto a \cos(\theta) + b \sin(\theta)$.  

The reverse mapping $\mathrm{pr_1} \colon L^2(\SO(2)) \to \rho_1$ is projection onto the $\rho_1$ summand and is given by the Fourier transform $\mathrm{pr_i}(f) = (\int_{S^1} f(\theta) \cos(\theta) d \theta, \int_{S^1} f(\theta) \sin(\theta) d \theta)$.

\paragraph{Per-particle linear mapping $\rhoreg \to \rhoreg$.}

Though $\rhoreg$ is not finite-dimensional, the fact that it decomposes into a direct sum of irreducible representations means that we may take $\rhoin = \rhoout = \rhoreg$ above.  Practically, however, it is easier to realize the linear equivariant map $\rhoreg^i \to \rhoreg^j$ as a convolution over $S^1$,
\[
O(\theta) = \int_{\phi \in S^1} \kappa(\theta-\phi) I(\phi)
\]
where $\kappa(\theta)$ is an $i \times j$ matrix of trainable weights, independent for each $\theta$. 

\subsection{Encoding Individual Particle Past Behavior}
\label{app:encodeindividual}

We can encode these individual attributes using a per vehicle \texttt{LSTM} \citep{hochreiter1997long}.  Let $X_t^{(i)}$ denote the position of car $i$ at time $t$.  Denote a fully connected \texttt{LSTM} cell by $h_t,c_t =  \mathtt{LSTM}(X_t^{(i)},h_{t-1},c_{t-1})$.
Define $h_{0} = c_{0} = 0$.  We  then use the concatenation of the hidden states $[h_{t_\mathrm{in}}^{(1)} \ \ldots\ h_{t_\mathrm{in}}^{(n)}]$ of all particles as $Z \in  \mathbb{R}^N \otimes \mathbb{R}^k$ as the encoded per-vehicle latent features.

\subsection{Encoding Past Interactions}
\label{app:encodeinteractions}

In addition, we also encode past interactions of particles by introducing a continuous convolution \texttt{LSTM}.  Similar to \texttt{convLSTM} we replace the fully connected layers of the original \texttt{LSTM} above with another operation \cite{xingjian2015convolutional}.   While \texttt{convLSTM} is well-suited for capturing spatially local interactions over time, it requires gridded information.  Since the particle system we consider are distributed in continuous space, we replace the standard convolution with rotation-equivariant continuous convolutions.

We can now define $H_t,C_t =  \mathtt{CtsConvLSTM}(X_t,H_{t-1},C_{t-1})$ which is an $\mathtt{LSTM}$ cell using equivariant continuous convolutions throughout.  Note that in this case $X_t,H_{t-1},C_{t-1}$ are all particle feature fields, that is, functions $\{ 1,\ldots,n \rbrace \to \mathbb{R}^k$.       

Define \texttt{CtsConvLSTM} by 
\begin{align*}
i_t &= \sigma(W_{ix} \star_{cts} X_t^{(i)} + W_{ih} \star_{cts} h_{t-1} + W_{ic} \circ c_{t-1} + b_i) \\
f_t &= \sigma(W_{fx}  \star_{cts} X_t^{(i)} + W_{fh}  \star_{cts} h_{t-1} + W_{fc} \circ c_{t-1} + b_i) \\
c_t &= f_t \circ c_{t-1} + i_t \circ \mathrm{tanh}(W_{cx} \star_{cts} X_t^{(i)} + W_{ch} \star_{cts} h_{t-1} + b_c ) \\
o_t &= \sigma(W_{ox} \star_{cts} X_t^{(i)} + W_{oh} \star_{cts} h_{t-1} + W_{oc} \circ c_t + b_o) \\
h_t &= o_t \circ \mathrm{tanh}(c_t),
\end{align*}
where $\star_{cts}$ denotes \texttt{CtsConv}.  We then can use $H_{t_\mathrm{in}}$ as input feature for the prediction network.

\subsection{Equivariance Error}

\label{app:equerror}

We prove the proposition in \Secref{sec:equivarianceerror}.  

\begin{prop} Let $\alpha = 2\pi/k_\theta$. Let $\bar{\theta}$ be $\theta$ rounded to nearest value in $\mathbb{Z} \alpha$. Set $\hat{\theta} = |\theta - \bar{\theta}|.$ Assume $n$ particles samples uniformly in a ball of radius $R$ with features $\rvf \in \rho_1^c$.  Let $\rvf$ and $K$ have entries sampled uniformly in $[-a,a]$. Let the bullseye have radius $0 < R_e < R$.   Let $F = \texttt{CtsConv}_{K,R}$ and $T_\theta = \rho_1(\mathrm{Rot}_\theta)$. Then the expected $\mathrm{EE}$ is bounded
\[
\mathbb{E}_{K,\rvf,\rvx}[T(F(\rvf,\rvx)) - F(T(\rvf),T(\rvx))] \leq | \sin(\hat{\theta})| C \leq 2 \pi C / k_\theta  
\]
 where $C = 4 c n a^2 (1 - R_e^2/R^2)$.
\end{prop}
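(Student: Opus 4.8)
The plan is to reduce the equivariance error to a pointwise estimate of how far the \emph{discretized} kernel $\tilde K$ --- the bilinear interpolant of the stored polar-grid tensor --- is from satisfying the kernel constraint \Eqref{eqn:e2-equ}, show that this defect is $O(|\sin\hat\theta|)$ at each point, and let the three expectations supply the remaining factors $n$, $c$, $a^2$ and $1-R_e^2/R^2$. First I would pass to the kernel: since $T=\rho_1(\Rot_\theta)$ acts orthogonally on positions, pairwise distances and the radial attention weights $a(\|\cdot\|)$ are $T$-invariant, so
\[
T\bigl(F(\rvf,\rvx)\bigr)^{(i)}-F\bigl(T\rvf,T\rvx\bigr)^{(i)}=\sum_j a(\|\rvy_{ij}\|)\,E(\rvy_{ij})\,\rvf^{(j)},\qquad \rvy_{ij}:=\rvx^{(j)}-\rvx^{(i)},
\]
with $E(\rvy):=T\,\tilde K(\rvy)-\tilde K(\Rot_\theta\rvy)\,T$ exactly the defect in \Eqref{eqn:e2-equ}. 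Taking norms, $\mathrm{EE}\le\sum_j a(\|\rvy_{ij}\|)\,\|E(\rvy_{ij})\|\,\|\rvf^{(j)}\|$, so it suffices to control $\|E(\rvy)\|$ pointwise.

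Two reductions simplify $E(\rvy)$. First, at the polar grid nodes the stored tensor obeys the weight-sharing identity of \Secref{subsec:orbits}, $\tilde K[m\alpha,r_l]=\rho_1(\Rot_{m\alpha})\tilde K[0,r_l]\rho_1(\Rot_{-m\alpha})$, and bilinear interpolation commutes with a rotation by a multiple of $\alpha$ (such a rotation fixes the radius and shifts angular indices while preserving the interpolation weights); hence $\tilde K(\Rot_{\bar\theta}\rvy)=\rho_1(\Rot_{\bar\theta})\tilde K(\rvy)\rho_1(\Rot_{-\bar\theta})$, and substituting this into $E$ shows $\|E(\rvy)\|$ equals the norm of the same expression with $\theta$ replaced by an angle of magnitude $\hat\theta\le\alpha/2$; in particular $E\equiv 0$ whenever $\theta\in\mathbb Z\alpha$. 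Second, for $\|\rvy\|<R_e$ the interpolant is the constant central value $\tilde K(\rvy)=K(0,0)$, which commutes with every $\rho_1(\Rot_\phi)$ by the stabilizer constraint \Eqref{eqn:constraint}, so $E(\rvy)=0$ there; thus only indices $j$ with $R_e\le\|\rvy_{ij}\|\le R$ contribute, an event of probability $1-R_e^2/R^2$ for $\rvx$ uniform in the ball of radius $R$.

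The crux is the pointwise bound for $R_e\le\|\rvy\|\le R$. Writing $\rho_1(\Rot_\phi)=\cos\phi\,I+\sin\phi\,J$ with $J:=\rho_1(\Rot_{\pi/2})$ (extended block-diagonally) and taking $\phi=\pm\hat\theta$ from the first reduction, split
\[
\rho_1(\Rot_\phi)\tilde K(\rvy)-\tilde K(\Rot_\phi\rvy)\rho_1(\Rot_\phi)=\bigl[\rho_1(\Rot_\phi),\tilde K(\rvy)\bigr]+\bigl(\tilde K(\rvy)-\tilde K(\Rot_\phi\rvy)\bigr)\rho_1(\Rot_\phi),
\]
whose norm equals $\|E(\rvy)\|$. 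The commutator $[\rho_1(\Rot_\phi),\tilde K(\rvy)]=\sin\phi\,[J,\tilde K(\rvy)]$ has norm $\le 2|\sin\hat\theta|\,\|\tilde K(\rvy)\|$. For the second bracket, at fixed radius the exact kernel $\psi\mapsto\rho_1(\Rot_\psi)M\rho_1(\Rot_{-\psi})$ has derivative $[J,\cdot]$ of norm $\le 2\|M\|$, so its piecewise-linear-in-angle interpolant $\tilde K(\cdot,\|\rvy\|)$ is $2\|M\|$-Lipschitz in the angle; since $\rvy$ and $\Rot_\phi\rvy$ differ only by the angle $\hat\theta$, this bracket has norm $\le 2\|M\|\hat\theta\le\pi\|M\|\,|\sin\hat\theta|$ (using $x\le\tfrac\pi2\sin x$ on $[0,\tfrac\pi2]$). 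As $\tilde K(\rvy)$ and $M$ are convex combinations of rotation conjugates of the freely learnable radial slices $K[0,r_l]$, their $K$-expectations have size $O(ac)$ for $2c$-dimensional blocks with entries in $[-a,a]$. Substituting back and using independence of $K,\rvf,\rvx$: the sum over $j$ gives $n$, $\mathbb E\|\rvf^{(j)}\|$ gives another factor $\sim a$, $\mathbb E_\rvx$ of the exclusion indicator gives $1-R_e^2/R^2$, $|\sin\hat\theta|$ factors out, and collecting the universal constants into $4$ yields $\mathbb E_{K,\rvf,\rvx}[\mathrm{EE}]\le|\sin\hat\theta|\,C$ with $C=4cna^2(1-R_e^2/R^2)$; finally $|\sin\hat\theta|\le\hat\theta\le\alpha/2=\pi/k_\theta\le2\pi/k_\theta$.

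The hard part is the pointwise estimate in the previous paragraph: the $\sin$-dependence must be extracted from the commutator identity $\rho_1(\Rot_\phi)A-A\rho_1(\Rot_\phi)=\sin\phi\,[J,A]$ \emph{before} any Lipschitz bound is invoked. A more naive split of $E$ into two interpolation-error terms $\|\tilde K-K\|$ gives a bound that does not vanish as $\hat\theta\to0$, whereas $E$ itself does, so it cannot be written as $|\sin\hat\theta|$ times a bounded quantity. One must also check that the interpolant's angular Lipschitz constant is uniform across angular cell boundaries, and pick norm conventions (entrywise or operator norm on $E$, $\ell^\infty$ on $\rvf$) so that the channel count $c$ enters the final constant only linearly. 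The claim in the first reduction --- that interpolation commutes with rotations by multiples of $\alpha$ once the radial coordinate and the central cell are accounted for --- is routine bookkeeping.
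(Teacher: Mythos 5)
Your overall strategy is sound and its structural skeleton matches the paper's: you reduce the error to the residual angle $\hat\theta$ (the paper does this by left-multiplying by the length-preserving $\rho_1(\Rot_{-\theta})$, you by commuting the interpolation with rotations by multiples of $\alpha$), you observe that the bullseye contributes nothing and supplies the factor $1-R_e^2/R^2$, and you extract the $\sin$ factor from the algebra of $\rho_1$. The execution, however, diverges from the paper in a way that matters for the stated constant. The paper's proof works with nearest-neighbour (not bilinear) angular interpolation, so the single-point defect is exactly $\rho_1(\Rot_\beta)K(\rvx)\rho_1(\Rot_{-\beta})\rvf-K(\rvx)\rvf$ with $\beta=\pm\hat\theta$, and it evaluates this in closed form: the part of $K(\rvx)$ commuting with rotations drops out entirely and what remains has norm exactly $\sqrt{((k_{12}+k_{21})^2+(k_{11}-k_{22})^2)(f_1^2+f_2^2)}\,|\sin\beta|\le 4|\sin\hat\theta|$ per channel for entries in $[-1,1]$, which is where the $4$ in $C$ comes from. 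Your triangle-inequality split into a commutator term (bounded by $2|\sin\hat\theta|\,\|\tilde K\|$) and an angular-increment term (bounded by $\pi\|M\|\,|\sin\hat\theta|$ via the Lipschitz estimate) is correct, and it is the right tool for the bilinear-interpolation kernel the implementation actually uses; but it cannot recover the constant $4$: even after exploiting independence to replace worst-case norms by expectations you land near $(2+\pi)\,\mathbb{E}\|K\|\,\mathbb{E}\|\rvf\|\approx 4.8\,a^2$ per channel, so ``collecting the universal constants into $4$'' is not justified as written. To get the stated $C$ you need the exact cancellation the paper exploits --- the $\rho_1$-commuting component of $K$ contributes nothing and should not be bounded separately --- whereas your argument as given proves the proposition only with a somewhat larger universal constant, which is admittedly all that the $|\sin\hat\theta|$ and $1/k_\theta$ scaling claims require.
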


\begin{proof}
We may compute for a single particle $\rvx = (\psi,r)$ and multiply our result by $n$ by linearity.  We separate two cases:  $\rvx$ in bullseye with probability $R_e^2/R^2$ and $\rvx$ in angular slice with probability $1-R_e^2/R^2$.  If $\rvx$ is in the bullseye, then there is no equivariance error since $K(\rvx)$ is a scalar matrix.  Assume $\rvx$ is an angular sector.  

For nearest interpolation, the equivariance error is then \rwa{$\hat{\theta}/\alpha$ should have $\alpha + \hat{\theta}$ actually.  Below is for $1 - \hat{\theta}/\alpha$ of them.}
\begin{align*}
\| \rho_1(\bar{\theta}) K(\rvx) \rho_1(-\bar{\theta}) \rho_1(\theta) \rvf - \rho_1(\theta) K(\rvx) \rvf  \|.   
\end{align*}
Since $\rho_1(\theta)$ is length preserving, this is 
\begin{align}
& \| \rho_1(-\theta) \rho_1(\bar{\theta}) K(\rvx) \rho_1(-\bar{\theta}) \rho_1(\theta) \rvf -  K(\rvx) \rvf  \| \nonumber \\    
 = & \| \rho_1(\beta) K(\rvx) \rho_1(-\beta) \rvf -  K(\rvx) \rvf  \| \label{eqn:EE1}     
\end{align}
where $\beta = \pm \hat{\theta}$.  We consider only a single factor of $\rho_1$ in $\rvf$.  The result will then be multiplied by $c$.  Let 
\[
K(\rvx) = \left(\begin{array}{cc}
    k_{11} & k_{12}   \\
    k_{21} & k_{22} 
\end{array}\right),  \quad \quad \quad  
\rvf = \left( 
\begin{array}{c}
     f_1  \\
     f_2  
\end{array}
\right).
\]
We can factor out an $a$ from $K(\rvx)$ and an $a$ from $\rvf$ and assume $k_{ij}, f_i$ samples from $\mathrm{Uniform}([-1,1])$.   
One may then directly compute that \Eqref{eqn:EE1} equals 
\[
\sqrt{((k_{21} + k_{12})^2 + (k_{11} - k_{22})^2) (f_1^2 + f_2^2) \sin^2(\beta)} 
\]
This is bounded above by $4 | \sin(\beta)| = 4|\sin(\hat{\theta})|$.  Collecting the above factors, this proves the bound $C | \sin(\beta)|$.  

The further bound follows by the first order bound,
\[
|\sin(\hat{\theta})|  \leq |\hat{\theta}| \leq 2 \pi / k_\theta.
\]


\end{proof}

The relationship $\mathrm{EE} \approx 2\pi C/k_\theta$ is visible in \autoref{fig:equivariance-error}.  We can also see clearly the significance of the term $|\sin(\hat{\theta})|$ by plotting equivariance error against $\theta$ as in \autoref{fig:equivariance-error2}. 

\begin{figure}[ht]
    \centering
\includegraphics[width=0.5\textwidth,trim=0 0 0 0, clip]{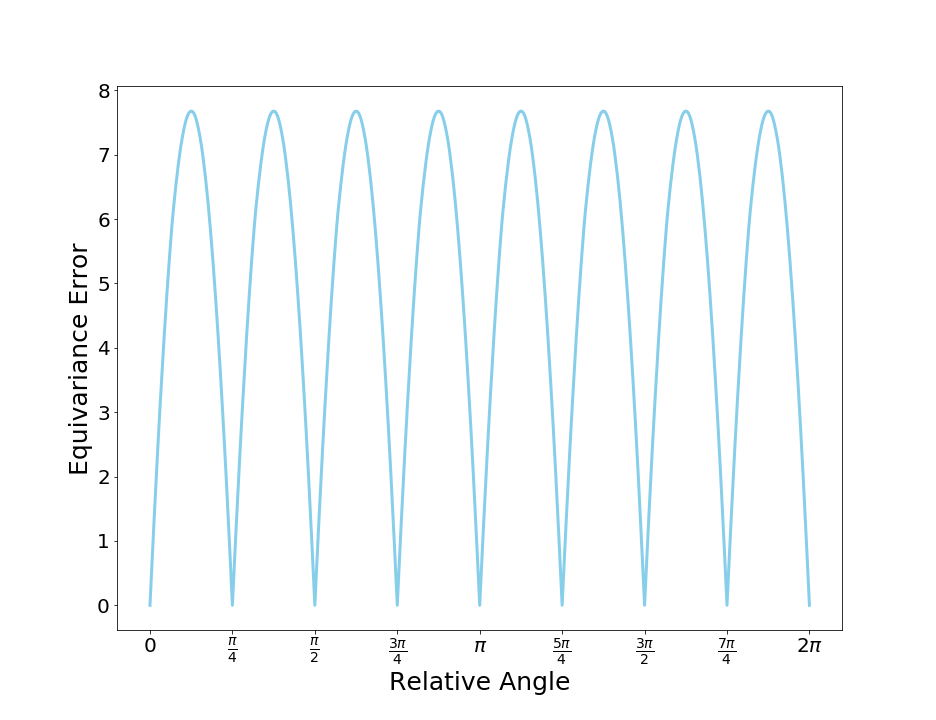}
    \caption{The above plot is generated from random input and kernels.  We can clearly see the dependence of of $\mathrm{EE}$ on $|\sin(\hat{\theta})|$}
    \label{fig:equivariance-error2}
\end{figure}

\subsection{Data Details}
\label{app:data}
Argoverse dataset includes 324K samples, which are split into 206K training data, 39K validation and 78K test set. All the samples are real data extracted from Miami and Seattle, and the dataset provides HD maps of lanes in each city. Every sample contains data for 5 seconds long, and is sampled in 10Hz frequency. 

TrajNet++ Real dataset contains 200K samples. All the tracking in this dataset is captured in both indoor and outdoor locations, for example, university, hotel, Zara, and train stations. Every sample in this dataset contains 21 timestamps, and the goal is to predict the 2D spatial positions for each pedestrain in the future 12 timestamps.

\subsection{Implementation Details}
\label{app:implementation}
Argoverse dataset is not fully observed, so we only use cars with complete observation as our input. Since every sample doesn't include the same number of cars, we only choose those scenes with less than or equal to 60 cars and insert dummy cars into them to achieve consistent car numbers. TrajNet++ Real dataset is also not fully observed. And here we keep our pedestrain number consistent to 160. 

Moreover, for each car, we use the average velocity in the past 0.1 second as an approximate to the current instant velocity, i.e. $v_t = (p_t - p_{t-1})/2$. As for map information, we only include center lanes with lane directions as features. Also, we introduce dummy lane node into each scene to make lane numbers consistently equal to 650.

In TrajNet++ task, no map information is included. And since pedestrians don't have a speedometers to tell them exactly how fast they are moving as drivers, instead they depends more on the relative velocities and relative positions to other pedestrians, we tried different combination of features in ablative study besides only using history velocities.

Our models are all trained by Adam optimizer with base learning rate 0.001, and the gamma rate for linear rate scheduler is set to be 0.95. All our models without map information are trained for 15K iterations with batch size 16 and learning rate is updated every 300 iterations; for models with map information, we train them for 30K iterations with batch size 16 and learning rate is updated every 600 iterations. 

For CtsConv, we set the layer sizes to be 32, 64, 64, 64, and kernel size $4\times4\times4$; for $\rho_1$-\ours{}, the layer sizes are 16, 32, 32, 32, $k_{\theta}$ is 16, $k_r$ is 3; for $\rhoreg$-\ours{}, we choose layer size 8, 16, 8, 8, $k_{\theta}$ 16, $k_r$ 3, and regular feature dimension is set to be 8. For 
Argoverse task, we set the CtsConv radius to be 40, and for TrajNet++ task we set it to be 6.

\subsection{Ablative Study}
\label{app:ablative}
We perform ablative study for \ours{} to further diagnose different encoders, usage of HD maps and other model design  choices. 
\vspace{-3mm}
\paragraph{Choice of encoders}
Unlike fluid simulations  \citep{ummenhofer2019lagrangian} where the dynamics are Markovian, human behavior exhibit long-term dependency.  We experiment with three different encoders refered to as $\mathtt{Enc}$ to model such long-term dependency: (1) concatenating the velocities from the past $m$ frames as input feature, (2) passing the past velocities of each particle to the same LSTM to encode individual behavior of each particle, and (3) implementing continuous convolution LSTM to encode past particle interactions.  Our  continuous convolution LSTM is similar to convLSTM \citep{xingjian2015convolutional} but uses continuous convolutions instead of discrete gridded convolutions.

We use different encoders to time-aggregate features and compare their performances (\autoref{table:encoder-comparison}).  

\begin{table}[ht]
\begin{center}
\begin{tabular}{l|cccc|cc}
\toprule
\multirow{2}{*}{Encoder} & 
\multicolumn{4}{c|}{Argoverse} & 
\multicolumn{2}{c}{\edits{TrajNet++}}  \\
   &   ADE & DE@1s & DE@2s & DE@3s & \edits{ADE} & \edits{FDE} \\
\midrule
    Markovian   & 4.67 & - & - & 9.84 & \edits{0.969} & \edits{1.952}\\
    LSTM      & 2.05 & 1.06 & 2.51 & 4.71 & \edits{0.909} & \edits{1.909}\\ 
    CtsConvLSTM & 3.98 & 2.02 & 5.11 & 8.40 & \edits{0.962} & \edits{1.941} \\
    CtsConvDLSTM & 2.02 & 1.03 & 2.46 & 4.58 & \edits{0.910} & \edits{1.916} \\
    D-Concat(20t feats) & \textbf{1.87} & \textbf{1.01} & \textbf{2.43} & \textbf{4.22}
    & \edits{\textbf{0.895}} & \edits{\textbf{1.872}}\\
\bottomrule
\end{tabular}
\caption{Ablation study on encoders for Argoverse and TrajNet++. Markovian: Use the  velocity from the most recent time step as input feature. LSTM: Used  LSTM to encode velocities of 20 timestamps. CtsConvLSTM: Instead of dense layer, the gate functions in LSTM are replaced by CtsConv. CtsConvDLSTM: Replaced gate functions by CtsConv + Dense. D-Concat (20t feats): Stacked velocities of 20 time steps as input. 
}
\label{table:encoder-comparison}
\end{center}
\end{table}


\vspace{-3mm}
\paragraph{Use of HD Maps}  In \autoref{table:map}, we compare performance with and without map input features.
\begin{table}[ht]
\begin{center}
\begin{tabular}{l|cccc | cccc}
\toprule
\multirow{2}{*}{Model}  &  \multicolumn{4}{c|}{w/o Map} & \multicolumn{4}{c}{w/ Map}  \\
\cmidrule(l){2-9}
  &  ADE & DE@1s & DE@2s & DE@3s &  ADE & DE@1s & DE@2s & DE@3s
\\ 
\midrule
    CtsConv  & 1.87 & 1.01 & 2.43 & 4.22 &  1.85 & 0.99 & 2.42 & 4.32\\
    $\rho_1$-\ours{} & \textbf{1.81} & 1.02 & 2.42 & 4.14    & 1.70 & 0.93 & 2.22 & 3.89\\
    $\rhoreg$-\ours{}      & \textbf{1.81} & \textbf{1.00} & \textbf{2.38} & \textbf{4.12}  & \textbf{1.62} & \textbf{0.89} & \textbf{2.12} & \textbf{3.68}\\
\bottomrule
\end{tabular}
\caption{Ablative study on HD maps for Argoverse. Prediction accuracy comparison with and without HD Maps. 
}
\label{table:map}
\end{center}
\end{table}


\edits{
\paragraph{Choice of features for pedestrian}
Unlike vehicles, people do not have a velocity meter to tell him how fast they actually walk. We realize that people actually tend to adjust their velocities based on others' relative velocity and relative position. We experiment different combination of features (\autoref{table:pedestrian-feature}), finding using relative velocities and relative positions as feature has the best performance. 

\begin{table}[ht]
\begin{center}
\begin{tabular}{c|c|c|cc}
\toprule
Velocity & Relative Position & Acceleration & ADE & FDE
\\ 
\midrule
    Absolute & $\times$ & $\times$ & 0.92 & 1.95 \\
    Absolute & $\times$ & \checkmark & 0.90 & 1.87 \\
    Relative & $\times$ & \checkmark & 0.89 & 1.86 \\
    Relative & \checkmark & \checkmark & \textbf{0.86} & \textbf{1.79} \\
\bottomrule
\end{tabular}
\caption{Ablative study on features for Traj++. Acceleration means whether we used acceleration to make numerically extrapolated position. 
}
\label{table:pedestrian-feature}
\end{center}
\end{table}
}

\subsection{Qualitative Results for TrajNet++}
\label{app:trajnet-visual}
\autoref{fig:trajnet-prediction} show qualitative results for TrajNet++.  Note that the non-equivariant baseline (2nd column) depends highly on the global orientation whereas the ground truth and equivariant models do not. 
\begin{figure}[t]
    \centering
    \includegraphics[width=3.3cm,trim=40 40 40 40, clip]{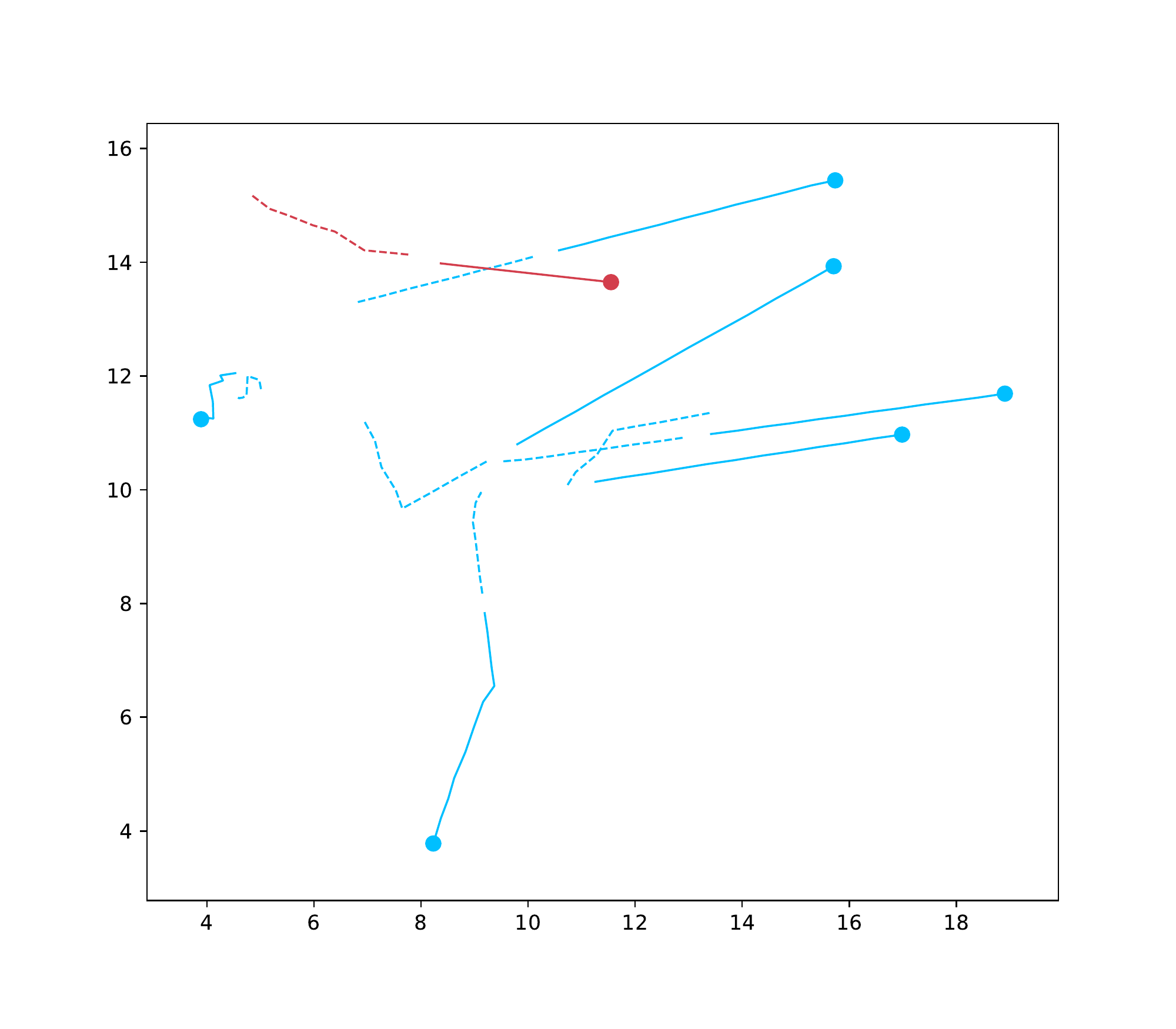} \!
    \includegraphics[width=3.3cm,trim=40 40 40 40, clip]{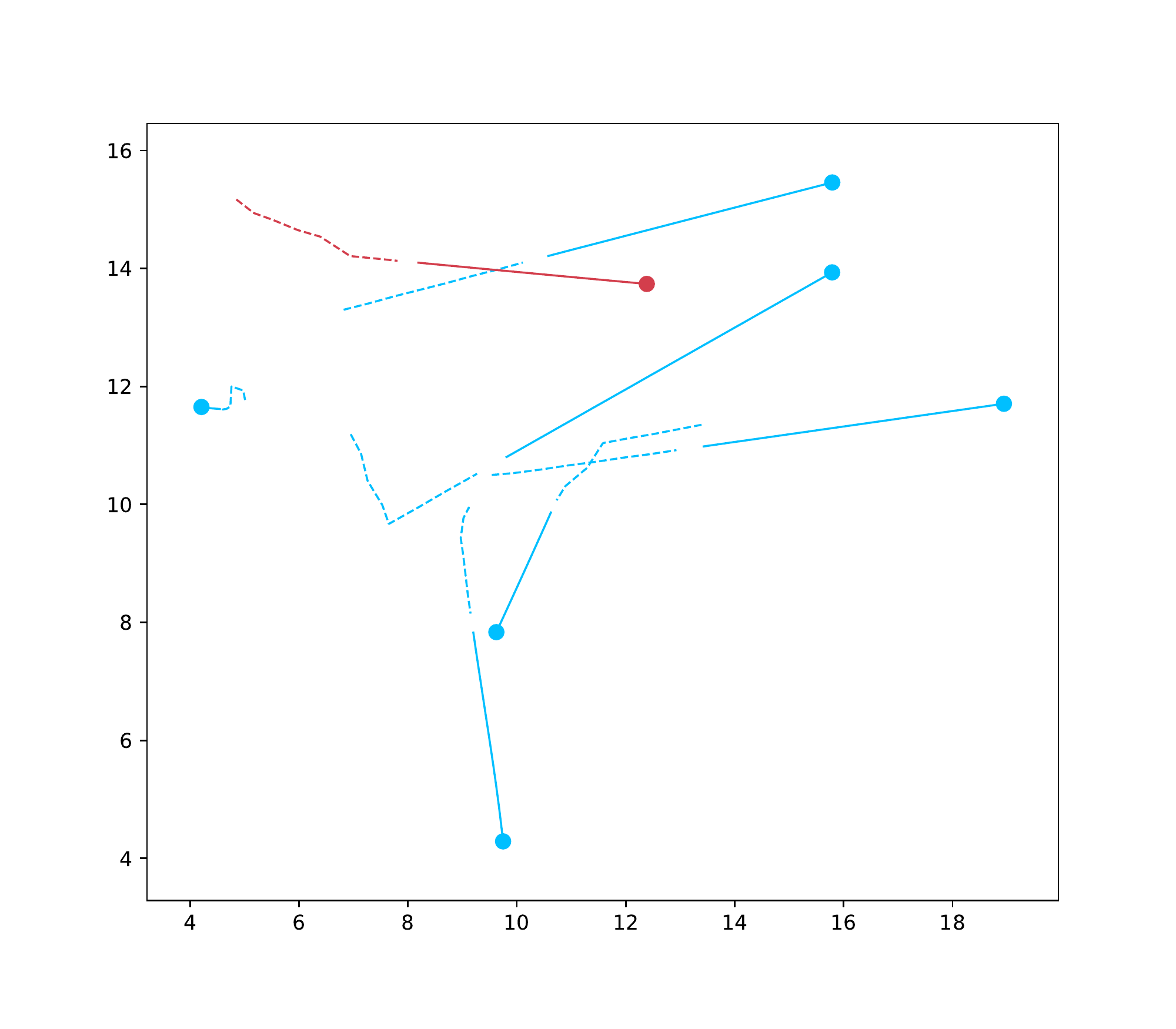} \! 
    \includegraphics[width=3.3cm,trim=40 40 40 40, clip]{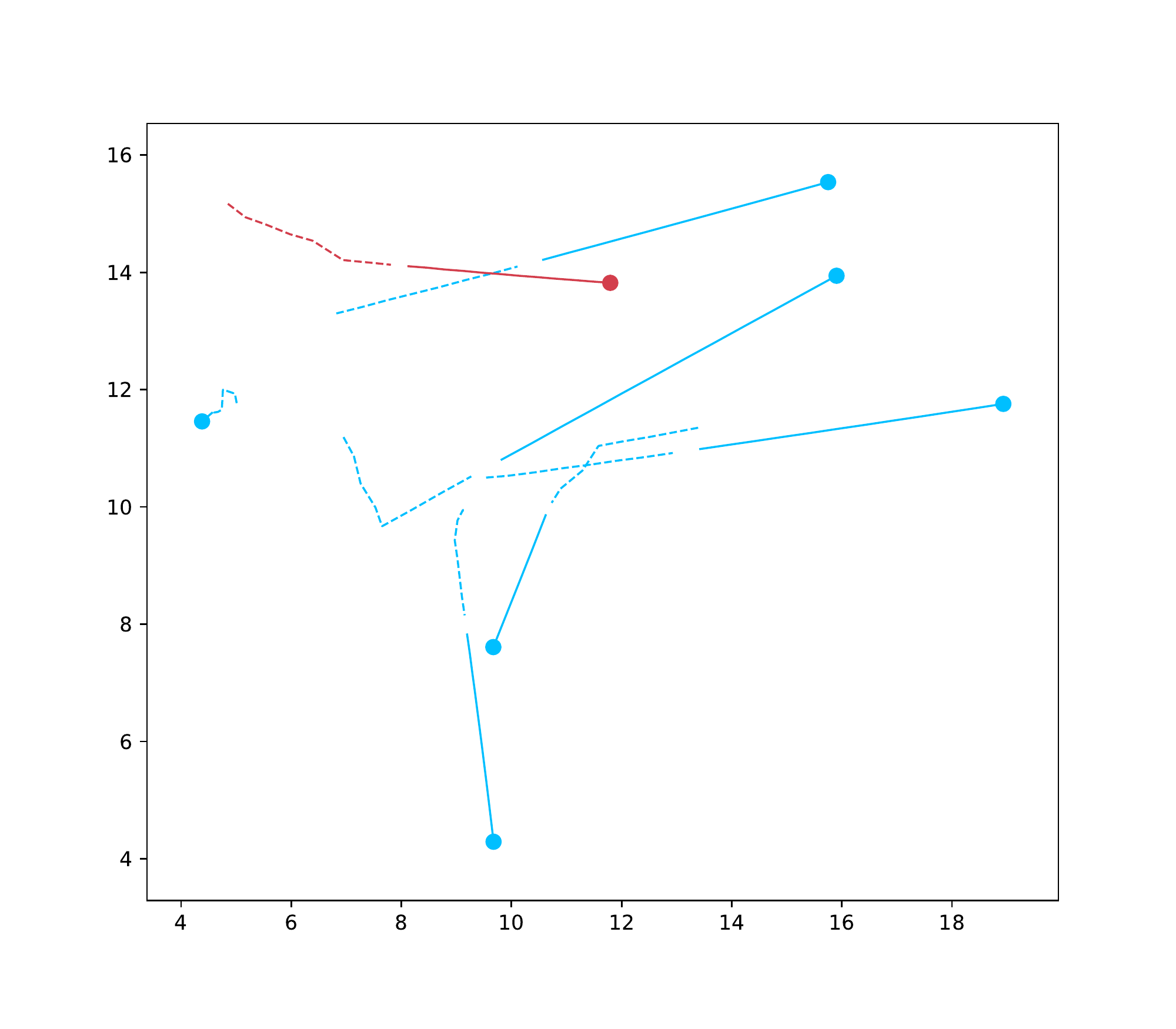} \!
    \includegraphics[width=3.3cm,trim=40 40 40 40, clip]{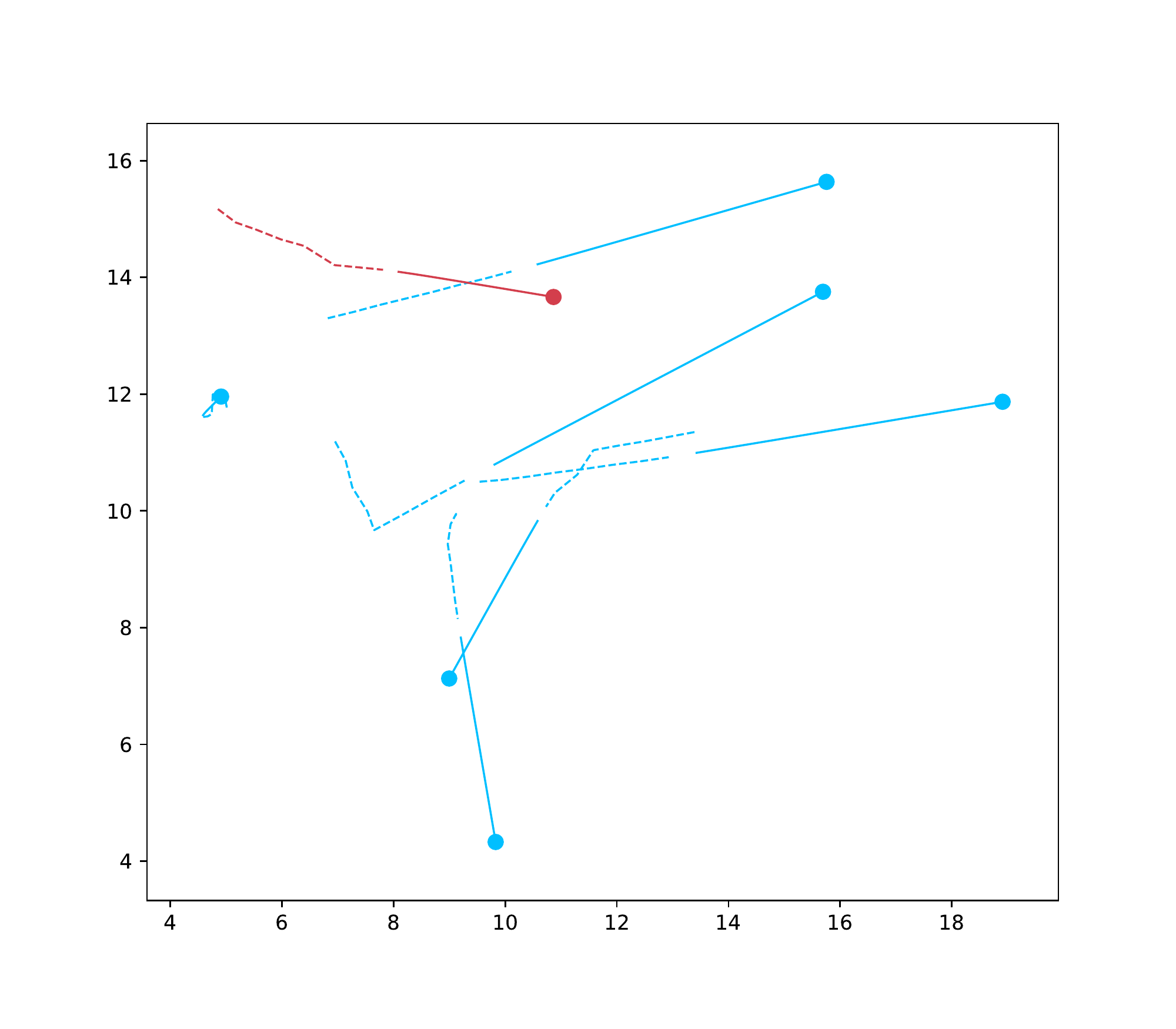} \!
    \includegraphics[width=3.3cm,trim=40 40 40 40, clip]{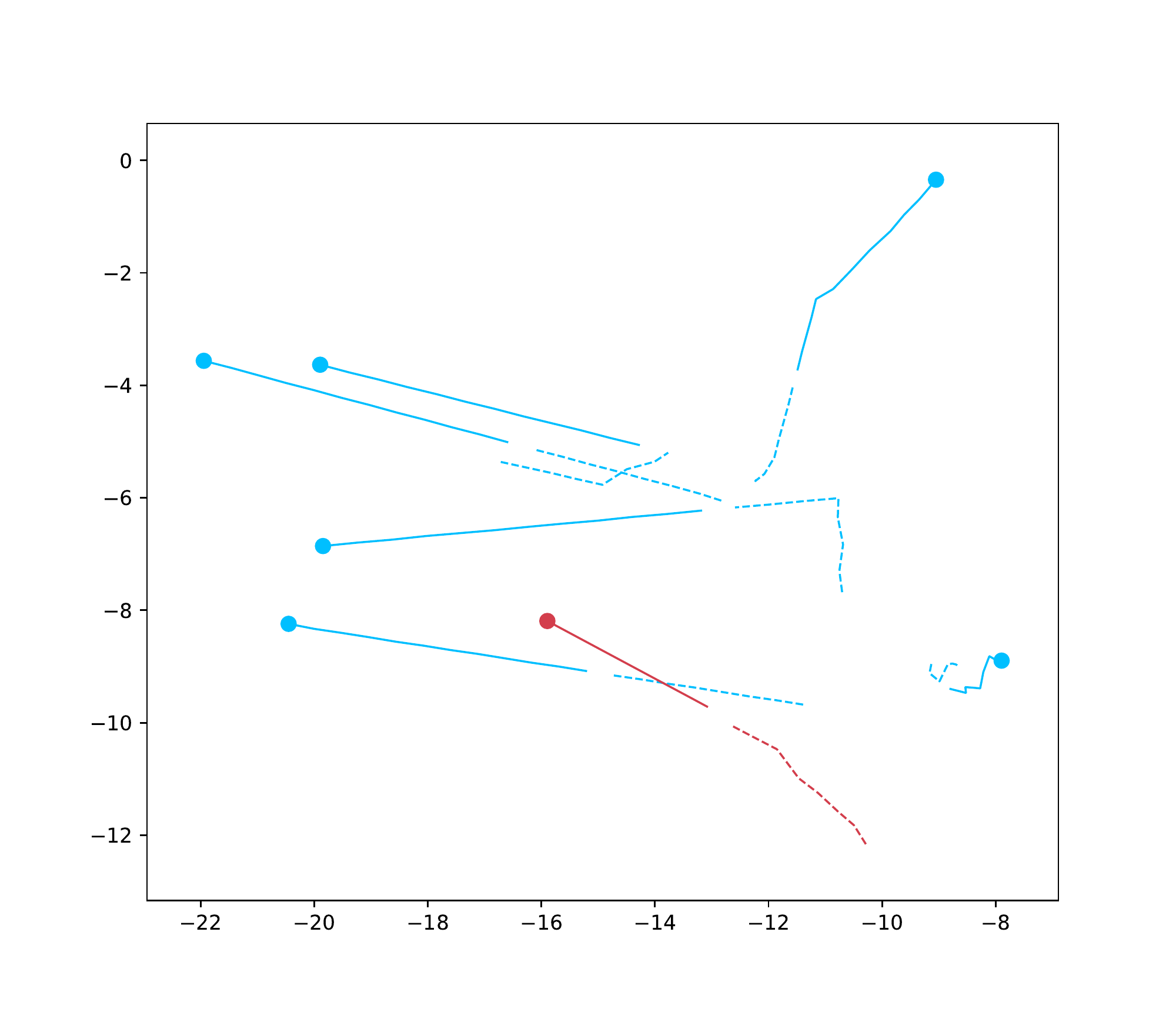} \!
    \includegraphics[width=3.3cm,trim=40 40 40 40, clip]{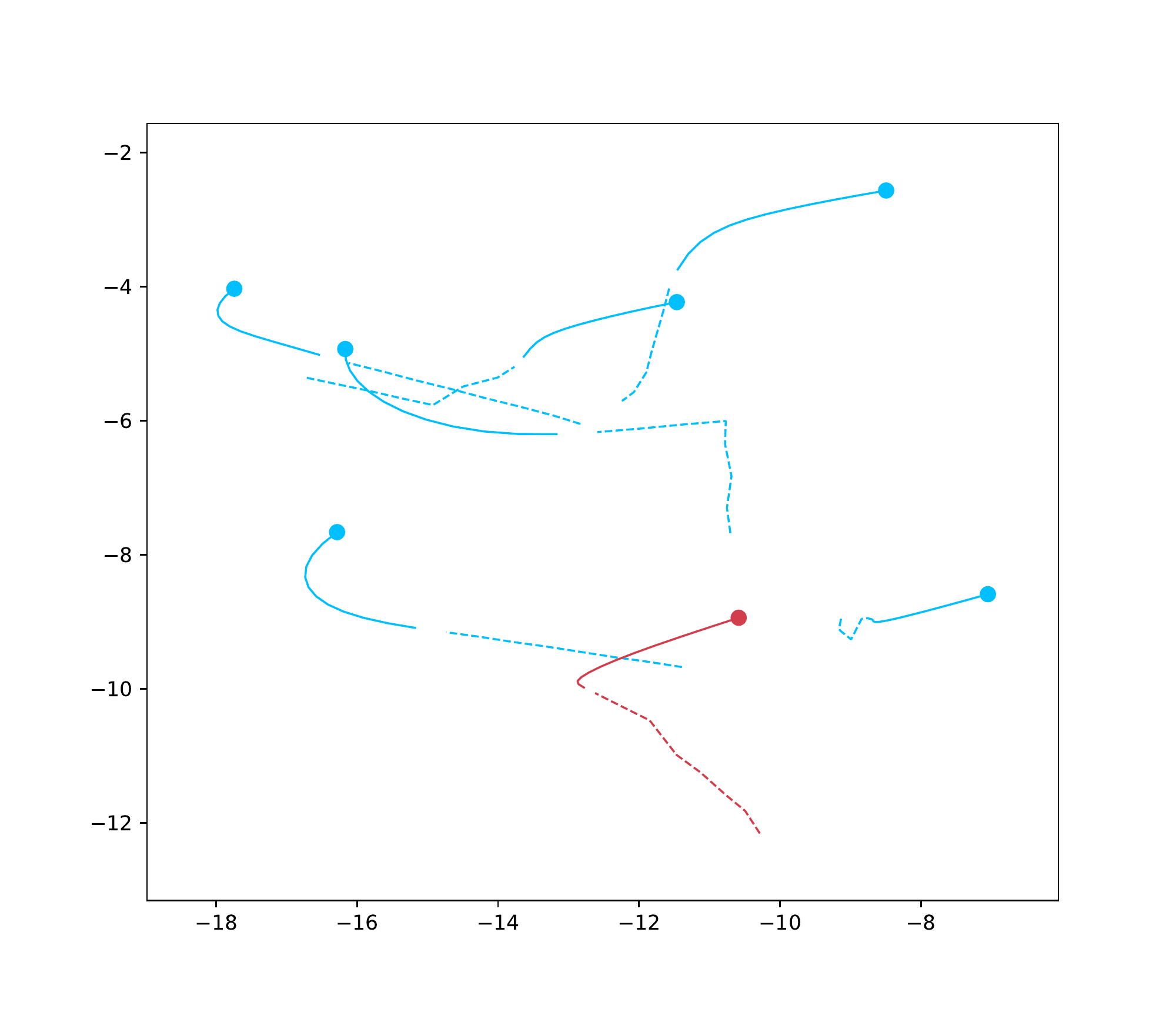} \! 
    \includegraphics[width=3.3cm,trim=40 40 40 40, clip]{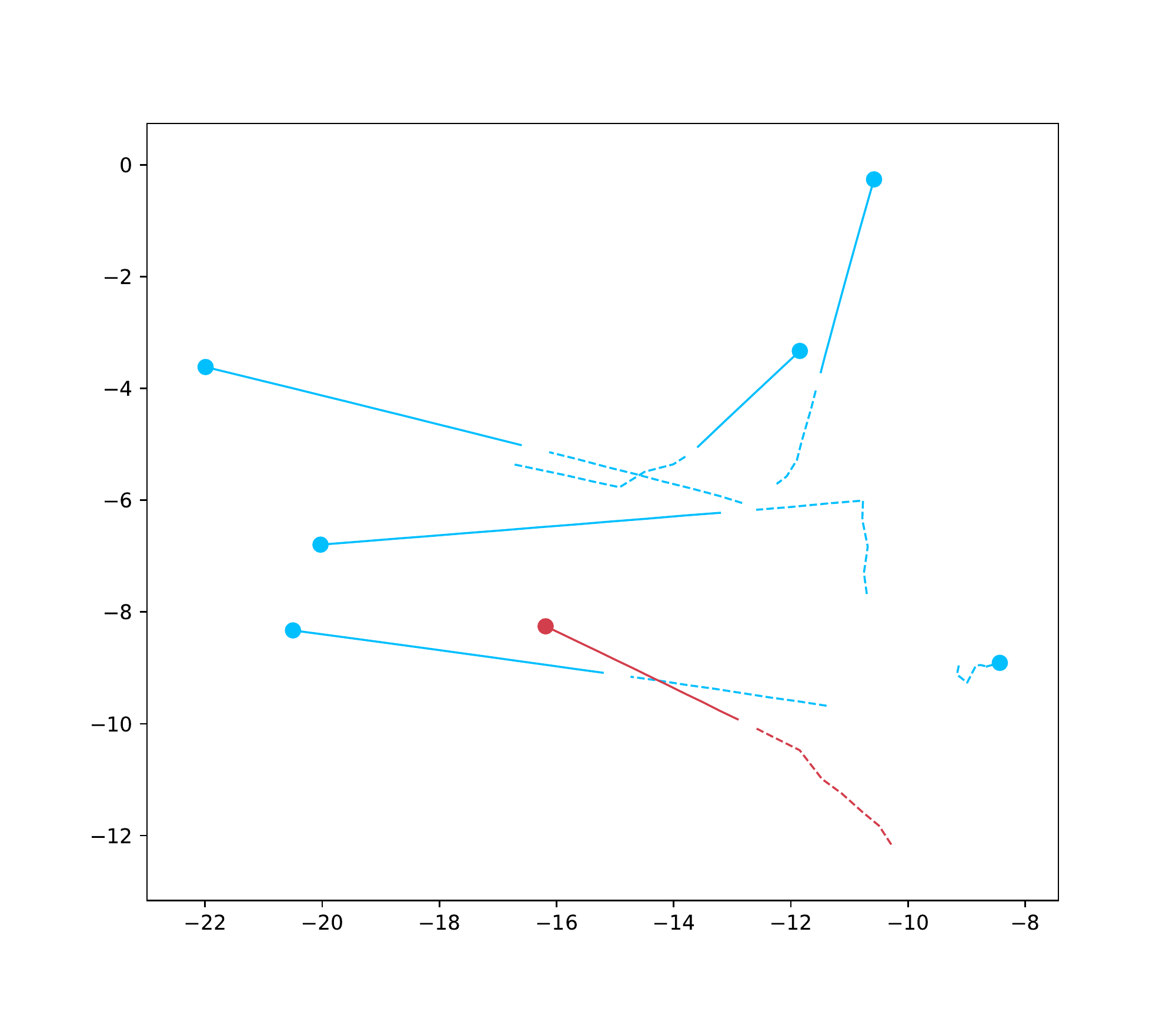} \!
    \includegraphics[width=3.3cm,trim=40 40 40 40, clip]{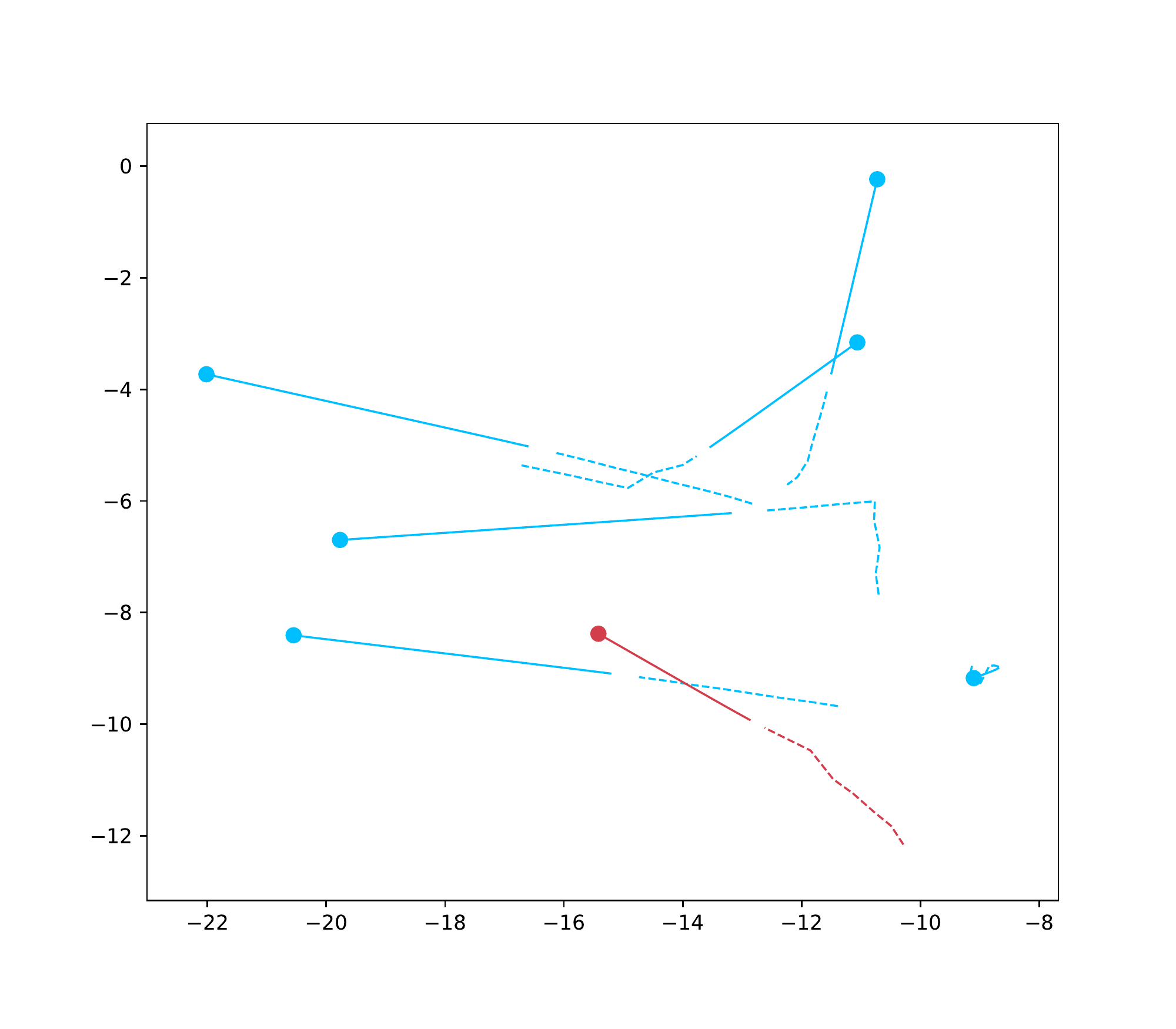} 
    \caption{The x,y-axes are the position (m). The dashed line represents the 2s past trajectory. The solid line represents the 3s prediction. Red represents the agent. Top row: The predictions are made on the original data. Bottom row: We rotate the whole scene by 160$^\circ$ and make predictions on rotated data. From left to right are visualizations of ground truth, CtsConv, $\rho_1$-ECCO, $\rhoreg$-ECCO. }
    \label{fig:trajnet-prediction}
\end{figure}
\clearpage

\end{document}